    \documentclass{article}
    
    \usepackage{graphicx}
    \usepackage{float}
    \usepackage[hypcap=false]{caption}
    \usepackage{subcaption}
    \usepackage{booktabs} 

    \usepackage{hyperref}
    \usepackage[colorinlistoftodos]{todonotes}
    
    \usepackage{microtype}
    
    \usepackage[preprint]{icml2026}
    
    \usepackage{amsmath}
    \usepackage{amssymb}
    \usepackage{mathtools}
    \usepackage{amsthm}
    \usepackage{enumitem}
    \usepackage[table]{xcolor}
    \definecolor{lightgray}{gray}{0.95}

    \usepackage[capitalize,noabbrev]{cleveref}
    
    \theoremstyle{plain}
    \newtheorem{theorem}{Theorem}[section]
    \newtheorem{proposition}[theorem]{Proposition}
    \newtheorem{lemma}[theorem]{Lemma}
    \newtheorem{corollary}[theorem]{Corollary}
    \theoremstyle{definition}

    \theoremstyle{remark}
    \newtheorem{remark}[theorem]{Remark}
    
    \icmltitlerunning{Recoverability Has a Law}
    
\begin{document}
    
    \twocolumn[
      \icmltitle{%
        \texorpdfstring
            {Recoverability Has a Law: \\ The ERR Measure for Tool-Augmented Agents}
            {Recoverability Has a Law: The ERR Measure for Tool-Augmented Agents}
      }
    
    
      \icmlsetsymbol{equal}{*}
    
      \begin{icmlauthorlist}
        \icmlauthor{Sri Vatsa Vuddanti}{equal,yyy}
        \icmlauthor{Satwik Kumar Chittiprolu}{equal,yyy,comp}
  
    
      \end{icmlauthorlist}

      \icmlcorrespondingauthor{Sri Vatsa Vuddanti}{srivatsa644@gmail.com}
      \icmlcorrespondingauthor{Satwik Kumar Chittiprolu}{satchi427@gmail.com}
    
      \icmlkeywords{Machine Learning, ICML}
    
      \vskip 0.3in
    ]
    
    
    
    \printAffiliationsAndNotice{}  

    \begin{abstract}
    Language model agents often appear capable of self-recovery after failing tool call executions, yet this behavior lacks a formal explanation. We present a predictive theory that resolves this gap by showing that recoverability follows a measurable law. To elaborate, we formalize recoverability through Expected Recovery Regret (ERR), which quantifies the deviation of a recovery policy from the optimal one under stochastic execution noise, and derive a first-order relationship between ERR and an empirical observable quantity, the \textit{Efficiency Score (ES)}. This yields a falsifiable first-order quantitative law of recovery dynamics in tool-using agents. We empirically validate the law across five tool-use benchmarks spanning controlled perturbations, diagnostic reasoning, and real-world APIs. Across model scales, perturbation regimes, and recovery horizons, predicted regret under the ERR–ES law closely matched observed post-failure regret measured from Monte Carlo rollouts, within $\Delta_\text{norm} \leq 0.05$. Our results reveal that recoverability is not an artifact of model scale or architecture, but a governed property of interaction dynamics, providing a theoretical foundation for execution-level robustness in language agents.
    \end{abstract}
    
    \section{Introduction}
    \label{sec:intro}
    
    Tool-augmented language models increasingly operate as autonomous controllers over APIs, databases, and multi-step workflows. \citep{qin2023toolllmfacilitatinglargelanguage, li2023apibankcomprehensivebenchmarktoolaugmented}

    In these interactive settings, robustness is no longer a property of the \emph{input} alone, but of the \emph{execution process} itself. A single malformed API response, transient failure, or schema 
    
    \begin{minipage}{\columnwidth}
    \centering
    \includegraphics[width=0.95\columnwidth]{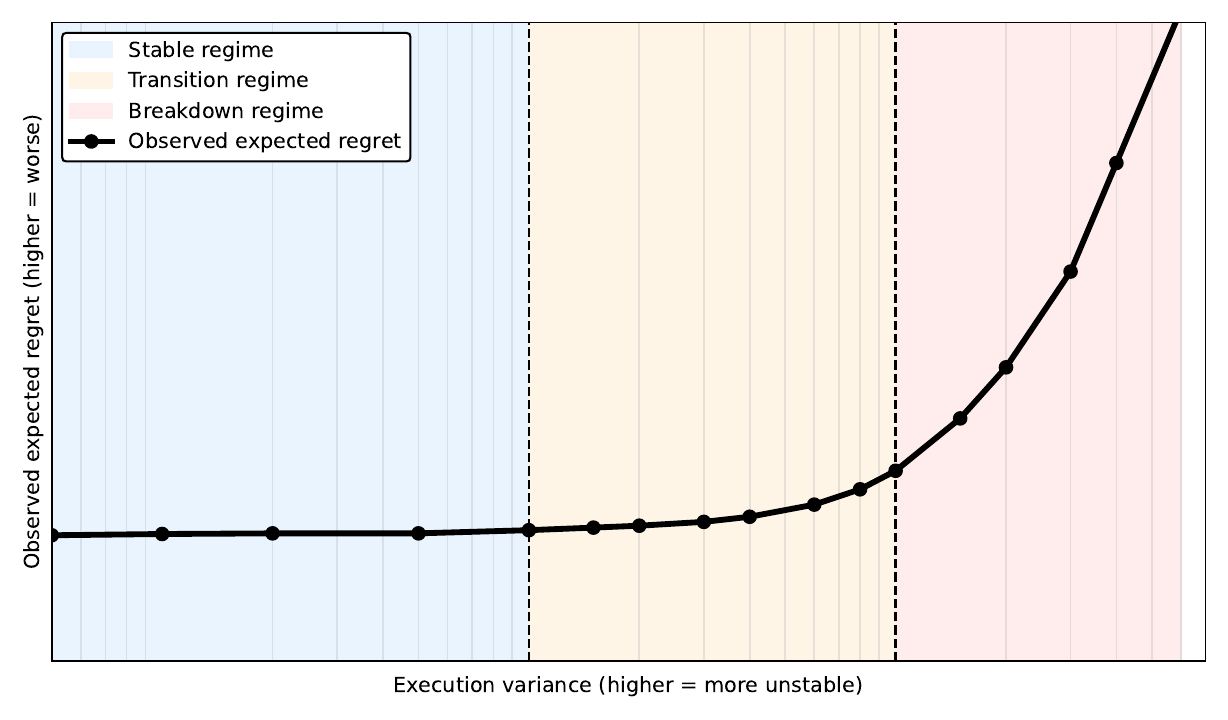}
    \captionof{figure}{Recoverability follows a measurable law under execution noise. Observed recovery regret remains predictable in a low-variance regime and breaks sharply as execution instability increases. This phase transition motivates the ERR formulation and its efficiency-based predictor.}
    \label{fig:law_overview}
    \end{minipage}
    
    drift can cascade through a chain of actions, producing errors that unfold across time rather than within a fixed feature space.

   Despite the inherently temporal structure of tool interaction, dominant robustness paradigms—distributional and adversarial—treat perturbations as static, pre-input events \citep{madry2019deeplearningmodelsresistant, zhang2019theoretically, goodfellow2015explaining}. As a result, the dynamics of execution and recovery remain analytically unmodeled.

    This gap becomes critical as language model agents are increasingly deployed as independent execution units in high-stakes settings, where failures propagate across time rather than across inputs. In practice, such agents often recover from execution errors, yet current theory offers no predictive account of when recovery succeeds, when it fails, or how errors compound over long horizons. \citep{shinn2023reflexion, gou2024criticlargelanguagemodels, vuddanti2025paladinselfcorrectinglanguagemodel}
    
    We formalize recoverability via the \emph{Expected Recovery Regret (ERR)}, which quantifies the performance gap between a stochastic recovery policy and an optimal one under execution noise. Our analysis derives a closed-form, first-order coupling between ERR and an empirically observable quantity, the \emph{Efficiency Score (ES)}. This yields a falsifiable quantitative law governing recovery dynamics, with distributional and execution-level robustness emerging as limiting cases under different assumptions on temporal coupling and perturbation structure.
    
    We use the term law in the same sense as scaling laws: an empirically stable regularity with a defined regime of validity, not a universal invariant.
    
    We test the law using controlled recovery policies that span the ERR–ES manifold. Across controlled benchmarks, diagnostic reasoning tasks, real-world APIs, and open-domain tool use, predicted regret under the ERR--ES law closely matches observed post-recovery outcomes. This agreement holds across model scales (8B–32B), perturbation regimes, and long recovery horizons.
    
    Together, these results show that robustness in tool-augmented language models is governed not by static invariance, but by measurable recoverability. The recovery law would hold even if our tested models were replaced by any policy that reduces variance and bounded cost under execution noise. By establishing and validating the ERR--ES law, this work reframes robustness as a quantifiable, law-governed property of interaction dynamics, rather than an emergent byproduct of scale or architecture.
    
    \section{Preliminaries}
    \label{sec:preliminaries}
    We formalize the execution-level setting, assumptions, and notation used throughout the paper. These definitions specify the regime in which the recovery law is derived and evaluated.
    
    \subsection{Execution-Level Setting}
    We consider interactive agents that issue tool calls over multiple timesteps. At each step $t$, the system observes a state $s_t$, executes an action $a_t$, and receives a potentially perturbed outcome governed by a stochastic process $\mathcal{F}$. Perturbations may arise from latency, schema drift, or partial tool failures and affect state transitions rather than the static input. We refer to this regime as \textbf{execution-level robustness}.

    \subsection{Assumptions}

    We model the agent as a policy $\pi_\theta(a_t \mid s_t)$ interacting with a finite-horizon environment under stochastic execution noise $\delta_t \sim \mathcal{F}(s_t,a_t)$. The analysis assumes:

    \begin{enumerate}[leftmargin=*]
    \item \textbf{Bounded cost:} per-step execution cost satisfies $c_t \in [0, c_{\max}]$.
    \item \textbf{Regular loss:} the loss $\ell(s_t,a_t,\delta_t)$ is Lipschitz-continuous in $\delta_t$.
    \item \textbf{Stationary perturbations:} $\mathcal{F}(s_t,a_t)$ is time-invariant.
    \item \textbf{Discounted horizon:} $\gamma \in (0,1)$ with $\gamma = 0.9$ in all experiments.
    \end{enumerate}
    
    These conditions place execution noise in a linear-response regime, enabling a first-order approximation of excess recovery loss.

    \subsection{Analytical Objective Preview}
    Under these assumptions, recoverability can be expressed as the deviation between the expected loss of a given policy and that of an optimal recovery policy acting under the same stochastic process.  The next section formalizes the execution-level setting, defines the expected recovery regret (ERR), and introduces the empirical efficiency metrics used to test the recovery law.

    \section{Formal Problem Definition and Metrics}
    \label{sec:problem}

    \subsection{Execution-Level Robustness}
    
    We consider an interactive agent with policy $\pi_\theta(a_t \mid s_t)$ acting over states $s_t \in \mathcal{S}$ and actions $a_t \in \mathcal{A}$, where each $a_t$ issues a tool call or structured reasoning step. At each timestep, stochastic execution noise $\delta_t \sim \mathcal{F}(s_t,a_t)$ captures schema drift, latency, malformed outputs, or other runtime perturbations. State transitions follow
    \[
    s_{t+1} = T(s_t,a_t,\delta_t),
    \]
    and each step incurs a bounded, Lipschitz loss $\ell(s_t,a_t,\delta_t)$, ensuring stable deviation under small perturbations.
    
    The total discounted loss of a trajectory $\tau$ is
    \begin{equation}
    L(\pi_\theta,\mathcal{F})
    =
    \mathbb{E}_\tau\!\left[
    \sum_{t=0}^{H} \gamma^t \ell(s_t,a_t,\delta_t)
    \right],
    \qquad \gamma \in (0,1).
    \end{equation}
    Execution-level robustness thus concerns the agent’s ability to correct within a trajectory rather than resist static input perturbations.
    
    \subsection{Expected Recovery Regret (ERR)}
    
    Let $\pi^*$ denote the optimal recovery policy under perturbation process $\mathcal{F}$:
    \[
    \pi^* = \arg\min_\pi L(\pi,\mathcal{F}).
    \]
    The \textbf{Expected Recovery Regret (ERR)} of $\pi_\theta$ is
    \begin{equation}
    \label{eq:err}
    \text{ERR}(\pi_\theta)
    =
    \mathbb{E}_{\mathcal{F}}\!\left[
    L(\pi_\theta,\mathcal{F}) - L(\pi^*,\mathcal{F})
    \right].
    \end{equation}
    ERR measures the excess loss incurred by a recovery policy relative to the optimal one. Direct evaluation is infeasible because neither $\pi^*$ nor the full perturbation process is known.
    
    \paragraph{Why ERR is a law of expectations.}
    Execution noise is stochastic and trajectory-dependent; therefore any recovery law must govern expected regret rather than individual rollouts. Instance-level outcomes are intentionally not predicted, just as classical regret bounds do not predict single trajectories. We approximate $\pi^*$ as the best achievable recovery behavior under identical tool access and execution constraints, estimated empirically across seeds. The ERR--ES law is therefore a structural law over distributions of executions, not pointwise behavior.

    \subsection{Observable Surrogates for Recovery Regret}
    \label{sec:metrics}
    
    We estimate recoverability from observable rollouts.
    
    \paragraph{Recovery Rate (RR).}
    The success probability of a full trajectory:
    \[
    \text{RR}
    =
    \mathbb{E}[\mathbb{I}(\text{success})].
    \]
    RR serves as a coarse measure of per-trajectory correction.
    
    \paragraph{Cost-Sensitive Recovery (CSR).}
    For completeness, we report $\text{CSR} = \text{RR} - \lambda C$, where $C$ is normalized cumulative cost. CSR is not used in the theoretical analysis but provides an interpretable cost–accuracy tradeoff.

    \paragraph{Efficiency Score (ES).}
    The ERR bound depends only on two observable aggregates: success and normalized cost.   
    \begin{equation}
    \label{eq:es}
    \text{ES}
    =
    \frac{\text{RR}}{1 + \lambda\, C / C_{\max}},
    \end{equation}
    where $C_{\max}$ is the global maximum observed trajectory cost. Normalizing $C \in [0,1]$ ensures comparability across environments with heterogeneous tool costs. 
    
    Under bounded cost and Lipschitz losses, ERR admits the upper bound
    \begin{equation}
    \label{eq:err_es_bound_main}
    \text{ERR}(\pi)
    \;\le\;
    \frac{1}{1-\gamma} (1 - \text{ES})
    +
    O(\lambda c_{\max}),
    \end{equation}
    derived from a first-order excess-loss decomposition standard in robust MDPs and risk-sensitive control. This approximation holds when execution variance is small and cost-weighted deviations grow approximately linearly with perturbation magnitude.

    We now show that ES is not a design choice, but the only surrogate compatible with linearized recovery regret.
    
    \begin{theorem}[Uniqueness of the Efficiency Surrogate]
    \label{prop:es_necessity}
    Assume:
    (i) excess recovery loss admits a first-order linearization in cost-weighted failure,
    (ii) recovery success and corrective cost are observable only through trajectory-level aggregates, and
    (iii) the surrogate metric is required to be monotone in success and cost.
    
    Then any scalar surrogate $S(\mathrm{RR}, C)$ whose complement $(1-S)$ is affine-equivalent to ERR in the bounded-cost regime must take the form
    \[
    S = \frac{\mathrm{RR}}{1 + \lambda C},
    \]
    up to positive affine rescaling. In particular, ES is not optimized by any method in our experiments; it emerges as a coordinate that organizes recovery behavior even when systems are not trained to maximize it.
    \end{theorem}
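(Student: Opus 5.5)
The plan is to make hypotheses (i)--(iii) precise enough that the functional form is forced, and then solve the resulting constraint. The argument has three steps.

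\textbf{Step 1: fix the linearized target.} Under (i), one cost-weighted failure counts a failed trajectory as one unit of loss. Each unit of normalized corrective cost $C$ dilutes the value of a success by a factor proportional to $1+\lambda C$, so success is discounted per unit of effective cost. The natural first-order model, consistent with the decomposition behind \eqref{eq:err_es_bound_main}, is therefore
\[
(1-\gamma)\,\mathrm{ERR} \;\approx\; 1 - \frac{\mathrm{RR}}{1+\lambda C} + O(\lambda c_{\max}),
\]
where successes are weighted by the reciprocal of their effective cost. I would state this normalization explicitly as the content of (i). Concretely, ERR is linear in $\mathrm{RR}$ for fixed $C$, with slope $-\frac{1}{(1-\gamma)\,g(C)}$. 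The cost factor $g$ must satisfy three conditions. It is first-order, so $g$ is affine, $g(C)=a+bC$. It satisfies $g(0)=1$, so that at zero cost ERR reduces to the pure failure rate. Its slope is $b=\lambda$, which defines $\lambda$ as the cost-weighting coefficient.

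\textbf{Step 2: use (ii) and affine equivalence.} By (ii), $S$ depends only on $(\mathrm{RR},C)$. Affine equivalence means there are constants $\alpha>0$ and $\beta$ with $1-S=\alpha\,\mathrm{ERR}+\beta$ for all admissible $(\mathrm{RR},C)$ in the bounded-cost regime. Substituting the model from Step 1 gives
\[
S(\mathrm{RR},C)=\alpha'\,\frac{\mathrm{RR}}{1+\lambda C}+\beta'.
\]
This determines $S$ up to positive affine rescaling. Once $C$ is normalized by $C_{\max}$, it coincides with \eqref{eq:es}.

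\textbf{Step 3: exclude alternatives with (iii).} Monotonicity forces $\alpha'>0$, so that $S$ is increasing in $\mathrm{RR}$, and $\lambda\ge 0$, so that $S$ is nonincreasing in $C$. This rules out sign-flipped rescalings. I would also check that an additive form such as $\mathrm{RR}-\lambda C$ (i.e.\ CSR) cannot be affine-equivalent. Along $\mathrm{RR}=0$, CSR varies with $C$ while the target is constant in $C$. This contradicts equivalence over a nondegenerate region, and that region is needed for the rescaling constants to be unique.

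\textbf{Main obstacle.} The hard step is Step 1. The theorem's hypotheses are informal, and the uniqueness claim is only as strong as the precise meaning given to ``first-order linearization in cost-weighted failure.'' If (i) is read as allowing any smooth $g$, the conclusion fails; $g(C)=e^{\lambda C}$ is a counterexample. So I would first pin down (i) as the multiplicative-discount model with affine $g$ and argue that this is the only first-order consistent choice. Specifically, expanding an arbitrary smooth $g$ to first order in $\lambda C$ gives $1+\lambda C$, and the discrepancy is absorbed into the $O(\lambda c_{\max})$ remainder. Uniqueness therefore holds only modulo that error term, and I would state it in that form.
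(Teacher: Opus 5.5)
\textbf{There is a genuine gap in Step 1: it is circular, not merely informal.} Declaring $(1-\gamma)\,\mathrm{ERR}\approx 1-\mathrm{RR}/(1+\lambda C)$ to be ``the content of (i)'' amounts to assuming that $1-\mathrm{ES}$ is affine-equivalent to ERR. Steps 2 and 3 then only show that a function affine in $\mathrm{RR}/(1+\lambda C)$ is affine in $\mathrm{RR}/(1+\lambda C)$. The theorem claims more: that a linearization in \emph{cost-weighted failure} forces this quotient. The paper's sketch starts from that input, $\mathrm{ERR}\propto\mathbb{E}[c_t(1-s_t)]$, which is the appendix linearization $L_t-L_t^*\approx\alpha\,c_t(1-s_t)$. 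It argues that the complement must couple success and cost multiplicatively, then appeals to monotonicity and boundedness to get a linear denominator.

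Your model is not of that type. A cost-weighted failure term vanishes at $C=0$ for every $\mathrm{RR}$, whereas your normalization $g(0)=1$ makes ERR equal to the pure failure rate at zero cost. Suppose you read (i) and (ii) literally, so that $\mathrm{ERR}\propto C(1-\mathrm{RR})$ at the aggregate level (exact when cost and failure are uncorrelated). Then affine equivalence forces $S=a-b\,C(1-\mathrm{RR})$ with $b>0$. This is monotone and bounded, but it is not affine-equivalent to ES. So the missing step is the passage from $c(1-s)$ to a quotient with a linear denominator, and you have supplied it by fiat.

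Your $e^{\lambda C}$ observation is well aimed. $\mathrm{RR}\,e^{-\lambda C}$ is also monotone and bounded, so the paper's appeal to monotonicity and boundedness does not single out a linear denominator either. The exact quotient cannot be obtained from (i)--(iii) without an extra hypothesis such as $S=\mathrm{RR}\,h(C)$ with $1/h$ affine, and that hypothesis is the conclusion again.

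\textbf{Retreating to uniqueness ``modulo the $O(\lambda c_{\max})$ remainder'' does not repair this, because at that precision the claim is empty.} Note that $\mathrm{ES}=\mathrm{RR}-\lambda C\cdot\mathrm{ES}$ exactly, and the paper's own derivation treats $\lambda C\cdot\mathrm{ES}$ as $O(\lambda c_{\max})$. So the entire cost dependence of ES sits inside the remainder. Then $\mathrm{RR}$ itself, $\mathrm{CSR}=\mathrm{RR}-\lambda C$, or any $\mathrm{RR}+O(\lambda C)$ would be equally admissible.

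In particular, your Step 3 exclusion of CSR uses exact affine equivalence along $\mathrm{RR}=0$. Yet $\mathrm{ES}-\mathrm{CSR}=\lambda C(1-\mathrm{ES})$ is itself $O(\lambda C)$, so you cannot keep both that exclusion and uniqueness modulo the remainder. Even a strictly second-order remainder, which would pin down $g'(0)=\lambda$, determines $S$ only up to $O(\lambda^2C^2)$. That does not distinguish $\mathrm{RR}/(1+\lambda C)$ from $\mathrm{RR}(1-\lambda C)$ or $\mathrm{RR}\,e^{-\lambda C}$.
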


    \begin{corollary}
    Any surrogate additive in RR and C, or separable across steps, cannot recover the ERR ordering under bounded cost.
    \end{corollary}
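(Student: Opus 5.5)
The plan is to derive the corollary from Theorem~\ref{prop:es_necessity}, using a \emph{cross-derivative} invariant. By the theorem, a surrogate recovers the ERR ordering in the bounded-cost regime exactly when $1-S$ is affine-equivalent to $1-\mathrm{ES}$. That is the same as requiring $S = a\,\mathrm{ES} + b$ with $a>0$ on the admissible region $\{(\mathrm{RR},C)\} \subseteq [0,1]^2$. Throughout, I read ``additive in RR and $C$'' the way CSR is built, i.e.\ $S = \alpha\,\mathrm{RR} - \beta C + \kappa$, with $\alpha,\beta>0$ by the monotonicity assumption (iii). This linear reading is forced. Under an arbitrary separable form $f(\mathrm{RR})+g(C)$, the choice $f=\log$, $g=-\log(1+\lambda C)$ reproduces $\log\mathrm{ES}$. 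So the corollary can only hold at the affine level that the theorem itself demands, and I will state this restriction explicitly.

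\textbf{Step 1 (additive case).} Compute the mixed partial of ES from \eqref{eq:es}, with $C$ already normalized:
\[
\frac{\partial^2 \mathrm{ES}}{\partial \mathrm{RR}\,\partial C} = -\frac{\lambda}{(1+\lambda C)^2} \neq 0 \quad (\lambda>0).
\]
Affine rescaling multiplies this quantity by $a>0$, so it stays nonzero. Any additive surrogate, by contrast, has an identically zero mixed partial. Hence no additive $S$ is affine-equivalent to ES.

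\textbf{Step 2 (orderings genuinely differ).} Affine inequivalence alone does not show that the orderings differ, so I will exhibit two policies that the surrogates rank in opposite orders.
\begin{itemize}
\item The level sets of ES are lines $\mathrm{RR} = s(1+\lambda C)$. Their slope $s\lambda$ in the $(C,\mathrm{RR})$ plane depends on the level $s$.
\item The level sets of an additive $S$ are parallel lines of fixed slope $\beta/\alpha$.
\item Pick a level $s$ with $s\lambda \neq \beta/\alpha$. Take two points on that ES level line, slightly perturbed to either side. Then ES ranks them one way and $S$ ranks them the other.
\end{itemize}
For a concrete witness, take $\lambda=1$, $S=\mathrm{RR}-C$, and the points $(\mathrm{RR},C)=(0.5,0)$ and $(0.9,0.5)$.
\begin{itemize}
\item $\mathrm{ES}$ gives $0.5$ and $0.6$, so the second point is preferred.
\item $S$ gives $0.5$ and $0.4$, so the first point is preferred.
\end{itemize}
Both points lie in the bounded-cost region, so $S$ fails to recover the ERR ordering there.

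\textbf{Step 3 (step-separable case).} A step-separable surrogate has the form $S=\sum_t \gamma^t h(x_t)$, where $x_t$ are per-step observables. I will argue that, at the trajectory-aggregate level permitted by assumption (ii), such an $S$ collapses to an additive function of the aggregates.
\begin{itemize}
\item Cost aggregates additively, $C = \sum_t c_t / C_{\max}$.
\item Success is a conjunction over steps. A step-separable functional can therefore see it only through a summed per-step success contribution, which is again additive.
\end{itemize}
The result is a function of the form $\phi(\mathrm{RR}) - \psi(C)$ with zero mixed interaction. Linearizing $\phi$ and $\psi$ under the first-order assumption (i) puts us back in Step~1.

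I expect this reduction to be the main obstacle. The identification of step-wise success terms with RR holds only to first order, so I would make it rigorous by linearizing around the low-variance regime that \eqref{eq:err_es_bound_main} already presumes.
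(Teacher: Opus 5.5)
Your Steps~1--2 are sound, and they sharpen what the paper intends. The paper gives no separate argument here: it reads the corollary off the uniqueness theorem, whose sketch says that $\mathrm{ERR}\propto\mathbb{E}[c_t(1-s_t)]$ forces success and cost to combine multiplicatively. Your mixed partial and level-line reversal make this precise. Your $\log\mathrm{ES}$ example also rightly shows that the additive clause holds only in its linear (CSR-type) form. One slip: in your opening, ``exactly when'' should be ``if''. That same example shows that recovering the ordering does not require affine equivalence, which is why Step~2 is needed.

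\textbf{The gap is Step~3.} Separability across steps is additivity in time, not separability between success and cost. Your claim that a step-separable functional ``can see success only through a summed per-step success contribution, which is again additive'' fails on two counts.

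First, $h$ can couple $s_t$ and $c_t$ within a step. Take $h=-c_t(1-s_t)$, giving $S=-\sum_t\gamma^t c_t(1-s_t)$. This $S$ is monotone and step-separable, and its expectation is $-1/\alpha$ times the paper's own linearized ERR in Appendix~A. So it recovers the ordering by construction, and the paper's sketch cannot supply this clause either. Only assumption (ii) can exclude this $h$, so you must invoke (ii) explicitly. Note also that with your $\gamma^t$ weights, no nonconstant $h$ yields a function of permutation-invariant aggregates at all: swapping steps $0$ and $1$ changes $S$ by $\pm(1-\gamma)(h(x_0)-h(x_1))$.

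Second, take unit weights and your own conjunction model, $\mathrm{RR}=\prod_t s_t$ and $C=\sum_t c_t$. The step-additive functions of these aggregates are then $a\sum_t\log s_t-b\sum_t c_t=a\log\mathrm{RR}-bC$ (Cauchy's equation after substituting $s=e^{-x}$). So $\phi=a\log$, not an affine map; this is your own $\log\mathrm{ES}$ example resurfacing. Your closing move, ``linearize $\phi$ and $\psi$ under (i)'', is not licensed, because (i) linearizes the excess loss, not the surrogate. It also changes the very ordering being tested. With $b=a\lambda$ one has $a\log\mathrm{RR}-bC=a\log\mathrm{ES}+O((\lambda C)^2)$, so to first order this step-separable surrogate \emph{does} track ES. Put differently, Step~1's invariant is a second derivative, which any first-order reduction discards.

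To close Step~3, drop the linearization and run your Step~2 argument on $a\log\mathrm{RR}-bC$ directly. On the ES level line $\mathrm{RR}=s(1+\lambda C)$ it equals $a\log s+a\log(1+\lambda C)-bC$. Its $C$-derivative, $a\lambda/(1+\lambda C)-b$, vanishes at most once, so perturbed pairs on that line are ranked oppositely by $S$ and ES. That proves the clause, with two caveats. For $b=a\lambda$ the reversal is only of order $(\lambda C)^2$, i.e.\ below the first-order resolution at which the law is asserted. And the argument relies on (ii) to rule out within-step coupling.
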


    \paragraph{Empirical correction.}
    Real tool-use environments exhibit mild heterogeneity in trajectory costs. We therefore track an empirical predictor
    \[
   \widehat{\text{ERR}}_{\text{law}}
=
\frac{1}{1-\gamma}(1-\text{ES}).
    \]
    All empirical law validations in the main paper use this uncorrected first-order predictor; no fitted or calibrated terms are used to evaluate adherence to the ERR--ES law.

    \paragraph{Theoretical role of $\beta$.}
    Under heterogeneous trajectory costs, the first-order excess-loss decomposition introduces a bias term proportional to the variance of $C$. The coefficient $\beta$ corresponds to this analytic curvature term (Appendix~\ref{appendix:variance_bias}), and its empirical estimation calibrates only the magnitude, not the structure, of the correction. Thus $\beta$ quantifies second-order curvature neglected by the first-order law.
    
    \subsection{Analytic Properties of the Recovery Law}
    
    We highlight two consequences of the ERR--ES coupling.
    
    \paragraph{Behavioral limits.}
    When $\lambda \to 0$, ES reduces to RR; when $\gamma \to 1$, ERR approaches the infinite-horizon regret of risk-sensitive RL.
    
    \begin{corollary}[Linear Regime]
    \label{cor:tightness}
    Under constant per-step cost $c_t=c\le c_{\max}$, the bound in Eq.~\ref{eq:err_es_bound_main} is tight up to $O(\lambda c_{\max})$:
    \[
    \text{ERR}(\pi)
    =
    \frac{1}{1-\gamma}(1-\text{ES}) + O(\lambda c_{\max}).
    \]
    \end{corollary}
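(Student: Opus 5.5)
The plan is to prove the Linear Regime corollary by computing ERR exactly when $c_t=c$, and then comparing the result with the right-hand side of Eq.~\ref{eq:err_es_bound_main}. The bound already gives the upper inequality, so what remains is a matching lower estimate.

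\textbf{Step 1: decompose the per-step loss.} I will write the loss as a failure indicator plus a cost term,
\[
\ell(s_t,a_t,\delta_t)=\mathbb{I}(\text{fail}_t)+\lambda c_t .
\]
This is the first-order excess-loss decomposition that underlies Eq.~\ref{eq:err_es_bound_main}; assumption (i) of Theorem~\ref{prop:es_necessity} is the same statement in linearized form.

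\textbf{Step 2: sum the constant cost.} With $c_t=c$, the cost part of $L(\pi,\mathcal{F})$ becomes deterministic:
\[
\lambda c\sum_{t}\gamma^t=\frac{\lambda c}{1-\gamma}+O(\gamma^{H}).
\]
This quantity is the same for $\pi$ and for $\pi^*$ (to the same order), so it cancels in ERR apart from a residual of size $O(\lambda c_{\max})$.

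\textbf{Step 3: handle the failure part.} By stationarity (assumption 3), the per-step failure probability is time-invariant, and the probability of not recovering equals $1-\mathrm{RR}$. The discounted failure contribution is therefore $\frac{1}{1-\gamma}(1-\mathrm{RR})$. I take $\pi^*$ to achieve $\mathrm{RR}\approx 1$ at the same constant cost, consistent with the paper's empirical approximation of $\pi^*$. Combining this with Step 2 gives
\[
\mathrm{ERR}=\frac{1}{1-\gamma}(1-\mathrm{RR})+O(\lambda c_{\max}).
\]

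\textbf{Step 4: replace RR by ES.} Since $C/C_{\max}\in[0,1]$, a Taylor expansion of Eq.~\ref{eq:es} gives
\[
\mathrm{ES}=\mathrm{RR}\bigl(1-\lambda C/C_{\max}+O(\lambda^2)\bigr),
\]
so $|\mathrm{ES}-\mathrm{RR}|\le\lambda$. Under constant cost, the normalized cost $C/C_{\max}$ is fixed, and after multiplying by the horizon factor this discrepancy contributes a term of order $\lambda c$. That term is absorbed into $O(\lambda c_{\max})$, which yields the claimed equality with matching upper and lower sides.

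\textbf{Main obstacle.} The hard part is the bookkeeping of the $\frac{1}{1-\gamma}$ factor. The substitution error in Step 4 is $O(\lambda)$ before discounting, so a naive estimate gives $O(\lambda/(1-\gamma))$ rather than $O(\lambda c_{\max})$. I will first try to resolve this by treating $\gamma=0.9$ as fixed, so that constants depending on $\gamma$ are absorbed into the $O(\cdot)$. If that is not acceptable, I will instead use the normalization $C_{\max}=c_{\max}/(1-\gamma)$, which makes $\lambda C/C_{\max}$ carry exactly the discount factor that the Step 2 cost sum produces.
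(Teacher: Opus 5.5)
There is a genuine gap. It originates in Step~1 and becomes visible in Step~4.

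Your decomposition $\ell=\mathbb{I}(\text{fail}_t)+\lambda c_t$ is additive in failure and cost. Assumption (i) of the uniqueness theorem, and the derivation the paper actually uses, instead linearize the excess loss in \emph{cost-weighted} failure, $L_t-L_t^*\approx\alpha\,c_t(1-s_t)$. The corollary following that theorem explicitly rules out additive forms. In your model the failure part of ERR carries no factor of $c$, so trading RR for ES costs
\[
\frac{1}{1-\gamma}(\mathrm{RR}-\mathrm{ES})=\frac{\lambda\,\mathrm{RR}\,(C/C_{\max})}{(1-\gamma)(1+\lambda C/C_{\max})},
\]
which is of order $\lambda$ and does not shrink with $c$. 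The argument breaks at your claim that the fixed ratio $C/C_{\max}$ ``contributes a term of order $\lambda c$''. That ratio is invariant under rescaling $c$, so it cannot supply a factor of $c$; under your fixed-horizon accounting it is exactly $1$, giving $\mathrm{ES}=\mathrm{RR}/(1+\lambda)$. Concretely, let $c=c_{\max}\to 0$ in your own model. Then ERR tends to $\frac{1}{1-\gamma}(1-\mathrm{RR})$, while the predictor stays $\frac{\lambda\,\mathrm{RR}}{(1+\lambda)(1-\gamma)}$ away, so the claimed identity fails whenever $\mathrm{RR}>0$.

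Neither fallback repairs this. Fixing $\gamma=0.9$ only absorbs the $1/(1-\gamma)$ and leaves an $O(\lambda)$ error. Setting $C_{\max}=c_{\max}/(1-\gamma)$ gives $C/C_{\max}=c/c_{\max}$, which is again scale-free; it also contradicts the definition of $C_{\max}$ as the maximal observed trajectory cost.

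The missing ingredient is the cost-weighted linearization itself, which is the route of the paper's appendix derivation. With $L_t-L_t^*\approx\alpha c_t(1-s_t)$, $c_t=c$, and stationarity, one gets $\mathrm{ERR}=\frac{\alpha c}{1-\gamma}(1-\bar s)$ up to the $\gamma^{T+1}$ truncation. The single inequality $c_t\le c_{\max}$ in the upper-bound derivation becomes an identity, which is all that ``tight'' means here. Substituting $1-\bar s=(1-\mathrm{ES})-\lambda C\,\mathrm{ES}$ leaves the residual $\frac{\alpha c\,\lambda C\,\mathrm{ES}}{1-\gamma}$. This is $O(\lambda c_{\max})$ precisely because it inherits the prefactor $c$. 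The prefactor $\alpha c$ on the leading term is then absorbed into the normalization, as the appendix does with $\alpha c_{\max}$. This route also makes your cost-cancellation Step~2 unnecessary.

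There is a secondary weak point in Step~3. Stationarity makes the per-step failure probability time-invariant, but it does not make it equal to $1-\mathrm{RR}$, since RR is a trajectory-level success rate. The paper avoids this by working with $\bar s=\mathbb{E}[s_t]$ and defining ES through $\bar s$ in the appendix.
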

    
    This regime illustrates the limiting case in which the ERR--ES law is exactly linear. 
    
    \paragraph{Scope of the Recovery Law.}
    ERR--ES provides a first-order characterization of recoverability. It captures the dominant contribution to recovery regret under bounded cost, stationary perturbations, and low execution variance. Higher-order effects appear only as controlled residuals (Section~\ref{appendix:variance_bias}) and vanish as recovery stabilizes.
    
    The intent is not exactness, but falsifiability: a minimal law that governs
    recovery dynamics in practical tool-use regimes.
    
    \begin{corollary}[Early-Step Stability]
    \label{cor:early_stability}
    Under discounting $\gamma<1$, higher early-step efficiency reduces downstream regret multiplicatively. In particular, recovery policies with higher ES at early timesteps suppress compounding error cascades.
    \end{corollary}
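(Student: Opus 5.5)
The plan is to derive Corollary~\ref{cor:early_stability} from the per-step discounted structure of $L(\pi,\mathcal{F})$ together with the first-order linearization already used for Eq.~\ref{eq:err_es_bound_main}.

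\textbf{Step 1: decompose ERR by time.} Write $\text{ERR}(\pi)=\sum_{t=0}^{H}\gamma^t\,\mathbb{E}[\ell_t^{\pi}-\ell_t^{*}]$. Define a per-step efficiency $\text{ES}_t=\text{RR}_t/(1+\lambda C_t/C_{\max})$, where $\text{RR}_t$ and $C_t$ are the success probability and normalized cost of the recovery attempt at step $t$. This is the same trajectory-level aggregate as in Eq.~\ref{eq:es}, restricted to the suffix starting at $t$. By the linearization in assumption (i) of Theorem~\ref{prop:es_necessity}, the immediate excess loss at step $t$ is, to first order, proportional to $(1-\text{ES}_t)$, with error $O(\lambda c_{\max})$.

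\textbf{Step 2: propagate errors through the dynamics.} An unrecovered failure at step $t$ perturbs $s_{t+1}=T(s_t,a_t,\delta_t)$. Because $\ell$ is Lipschitz in the perturbation (assumption 2), the excess loss at step $t+1$ is bounded by a constant $\kappa$ times the mass of unrecovered failures that persist. Under stationary noise (assumption 3), the residual failure mass after step $t$ is at most $\prod_{k\le t}(1-\text{ES}_k)$ plus fresh first-order terms.

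Unrolling this recursion gives
\[
\text{ERR}(\pi)\lesssim \sum_{t\ge 0}\gamma^t \prod_{k\le t}(1-\text{ES}_k) + O(\lambda c_{\max}).
\]
Each later term therefore carries a multiplicative factor of $(1-\text{ES}_0)$, and more generally of $(1-\text{ES}_k)$ for every $k<t$. Increasing an early $\text{ES}_k$ scales down every downstream term by the same factor; this is the claimed multiplicative reduction. It also gives "cascade suppression": the partial sums are dominated by a geometric series with ratio $\gamma(1-\text{ES}_{\min})<1$, so compounding is bounded by $1/(1-\gamma(1-\text{ES}_{\min}))$ instead of growing with $H$. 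As a consistency check, setting all $\text{ES}_k=\text{ES}$ recovers the form of Eq.~\ref{eq:err_es_bound_main} up to first order, matching Corollary~\ref{cor:tightness}.

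\textbf{Main obstacle.} The hard part is Step 2: justifying that the residual failure mass evolves multiplicatively rather than additively. That requires that a successful recovery at step $k$ actually resets the state, so that errors do not persist latently. My first attempt would be a coupling argument. Couple $\pi$ with $\pi^*$ under identical noise draws $\delta_t$, and define the event "trajectories agree from step $k$ onward" as the recovery event, which has probability $\text{RR}_k$. I would then absorb the cost penalty into the $O(\lambda c_{\max})$ residual. If exact resetting fails, I would fall back to a contraction assumption on $T$ within the linear-response regime, which yields the same product structure with an extra constant factor.
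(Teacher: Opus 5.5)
The paper never proves this corollary. It is asserted in Section~\ref{sec:problem} and only illustrated empirically in Section~\ref{sec:rq3_5} (Table~\ref{tab:step_csr}). Its only analytic tool is the additive linearization from the appendix derivation of Eq.~\ref{eq:err_es_bound_main}, $\mathrm{ERR}=\mathbb{E}\bigl[\sum_t\gamma^t\alpha c_t(1-s_t)\bigr]$, in which each step contributes separately. There an early improvement helps only through its larger weight ($\partial\,\mathrm{ERR}/\partial s_t=-\alpha c_t\gamma^t$) and never rescales later terms. Your survival-product model is therefore new structure, not a consequence of the paper's assumptions, and as written it does not go through.

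The concrete gap is in Step~2, where you drop the ``fresh first-order terms.'' Under assumption~3 a new perturbation $\delta_t$ is drawn at every step, so new failure mass $p$ enters at every step. Nothing in the assumptions makes $p$ small: the linearization is small in $\lambda c_{\max}$, not in the failure rate. Keeping these terms in your own model, the unrecovered mass at step $t$ is $p\sum_{j\le t}\prod_{k=j}^{t}(1-\mathrm{ES}_k)$, and only the $j=0$ summand contains $(1-\mathrm{ES}_0)$. So ERR is affine, not proportional, in $(1-\mathrm{ES}_0)$, with a large $\mathrm{ES}_0$-independent offset. The influence of $\mathrm{ES}_0$ on the step-$t$ term decays like $\prod_{k=1}^{t}(1-\mathrm{ES}_k)$ instead of scaling every downstream term.

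Your consistency check also fails. With $\mathrm{ES}_k\equiv\mathrm{ES}$ your bound sums to $(1-\mathrm{ES})/(1-\gamma(1-\mathrm{ES}))$, not $(1-\mathrm{ES})/(1-\gamma)$. These differ by a factor $1/(1-\gamma)$ already at first order in $1-\mathrm{ES}$. The ratio $(1-\gamma(1-\mathrm{ES}))/(1-\gamma)$ ranges from about $3.8$ to $8.3$ over the ES values in Table~\ref{tab:main_results}, so no absorbed constant reconciles the two. Your upper bound would therefore contradict the tightness claimed in Corollary~\ref{cor:tightness}. That corollary's additive form is exactly the fresh-failure, no-carry-over regime where your multiplicative mechanism is absent.

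There are also smaller issues:
\begin{itemize}
\item You define $\mathrm{ES}_t$ both as a single-step quantity and as a suffix aggregate. A suffix aggregate already contains the later steps, so the product double counts.
\item The coupling event ``$\pi$ agrees with $\pi^*$ from step $k$ on'' is not the success event: $\pi^*$ can fail on the same noise draw, and $\pi$ can succeed along a different path. Its probability is therefore not $\mathrm{RR}_k$.
\item A contraction on $T$ gives decay at $T$'s modulus, not at rate $1-\mathrm{ES}_k$, so the fallback does not produce your product structure.
\end{itemize}
To get a genuinely multiplicative early-step effect, you must posit, as an explicit extra hypothesis, a single initial failure that persists until repaired, with no fresh injections. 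You must then accept that the main law's $1/(1-\gamma)$ prefactor no longer holds under that hypothesis.
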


    \section{Empirical Framework for Law Validation}
    \label{sec:method}
    
    This section describes how the theoretical quantities introduced in Section~\ref{sec:problem}—the \emph{Expected Recovery Regret (ERR)} and the \emph{Efficiency Score (ES)}—are instantiated to enable empirical validation of the recovery law. We define a minimal evaluation framework that operationalizes the law’s assumptions and permits controlled measurement in tool-use environments.
    
    Throughout, we use \textbf{FORTIFY} as a concrete instantiation of this framework: a lightweight recovery mechanism that realizes the ERR objective under bounded cost and stationary perturbations. The recovery law itself is agnostic to this choice; FORTIFY serves only as a tractable evaluation vehicle.
    
    \subsection{Principles for Empirical Approximation of ERR}
    
    Because $\pi$* is unobservable, we evaluate the law using a family of recovery policies that generate diverse (RR, C) operating points. These policies are instruments for probing the ERR–ES manifold, not objects of optimization.
    
    \subsection{Approximation of the Recovery Policy}
    
    To approximate $\pi^\star$ for evaluation, we employ a supervised recovery prior trained on synthetic failure trajectories $(s_t,a_t,\delta_t)$ generated by a stochastic failure simulator. Each trajectory pairs a perturbed tool interaction with a corrected counterpart, yielding structured recovery examples.
    
    The resulting policy is not assumed optimal; it provides a stable reference whose behavior can be analyzed under the ERR--ES framework.
    
    \subsection{Variance Reduction via Retrieval Conditioning}
    
    At inference time, the recovery prior may be augmented with retrieval-based conditioning \citep{borgeaud2022improving, wu2022memorizingtransformers}. Given a state $s_t$, the policy retrieves $K$ semantically similar failure–recovery exemplars and conditions its next action on them:
    \[
    \pi_{\theta}^{\text{RTV}}(a_t \mid s_t)
       = \pi_{\theta}(a_t \mid s_t, r_{1:K}).
    \]
    
    This conditioning aligns local recovery decisions with previously observed failure structures, empirically reducing the variance component of ERR (Equation~\ref{eq:err_es_bound_main}) without modifying model parameters.
    
    \subsection{Operational Measurement and Falsifiability}
    
    During interaction, the framework executes an iterative recovery loop that detects failures, conditions on prior recovery structure when available, and reissues tool calls until success or cost-budget exhaustion. From these trajectories, we compute recovery rate (RR) and efficiency score (ES), which serve as empirical counterparts of ERR.
    
    Because the ERR--ES law makes quantitative predictions about how regret scales with efficiency, any recovery policy instantiated within this framework constitutes a test point rather than a privileged solution.
    \section{Experimental Setup}
    \label{sec:exp}
    
    We evaluate whether the ERR--ES law predicts empirical recoverability across
    models, tools, and perturbation regimes. The goal is not to compare recovery systems per se, but to test the law’s \emph{predictive validity}, \emph{generality}, and \emph{robustness} under increasingly stochastic execution noise.

    \subsection{Evaluation Questions}
    
    The experiments are designed to test three hypotheses implied directly by the ERR--ES relationship:
    \begin{enumerate}
    \item \textbf{Predictive validity:}
    empirical recovery regret scales with $(1-\text{ES})/(1-\gamma)$, a predicted by the law.
    \item \textbf{Variance sensitivity:}
    reductions in trajectory-level variance tighten adherence to the predicted efficiency--regret coupling.
    \item \textbf{Generality:}
    the same coupling holds across synthetic, diagnostic, and open-world perturbation regimes without environment-specific tuning.
    \end{enumerate}
    
    \subsection{Benchmarks and Perturbation Regimes}
    
    We evaluate recoverability across five settings that span increasing degrees of stochasticity:
    \textsc{FortifyEval} (controlled perturbations),
    \textsc{ToolReflectEval} (diagnostic reasoning failures),
    \textsc{API--Bank} (real APIs with schema drift),
    \textsc{Wikidata} (cross-domain semantic queries),
    and three production APIs (finance, travel, weather) exhibiting non-stationary execution noise. Together, these environments enable testing whether the ERR--ES coupling persists when the underlying perturbation process $\mathcal{F}$ is partially or fully unknown. Additional task details appear in Appendix~\ref{sec: benchmark_depth}.
    
    \subsection{Models and Recovery Policies}
    
    We evaluate four open-weight backbones: Qwen~2.5 (14B), Gemma~3 (12B),
    Thinking~V1 (32B), and Llama~3 (8B). To generate diverse recovery behaviors, we instantiate several commonly used recovery policies, including supervised correction, self-reflection, critique-based revision, and a non-recovery baseline. All methods share identical prompting formats, tool schemas, and execution budgets to isolate differences in recovery dynamics. Implementation details are provided in Appendix~\ref{sec: implementation}.
    
    \subsection{Protocol and Metrics}
    
    For each benchmark, model, and recovery policy, we perform $200$ Monte-Carlo rollouts with perturbations $\delta_t \sim \mathcal{F}(s_t,a_t)$. We compute recovery rate (RR) and efficiency score (ES) as:
    \[
    \text{RR}
    =
    \frac{1}{N}\sum_i \mathbb{I}_i,
    \qquad
    \text{ES}
    =
    \frac{1}{N}\sum_i
    \frac{\mathbb{I}_i}{1+\lambda C_i},
    \]
    where $C_i$ denotes normalized cumulative cost. We fix $\gamma = 0.9$ across all benchmarks, consistent with short-horizon interactive settings. All results report means and $95\%$ bootstrap confidence intervals \citep{efron1994introduction} over five random seeds. Unless otherwise stated, all reported predicted ERR values use the first-order law with $\beta = 0$.
    
    \subsection{Law Adherence Measurement}
    
    To quantify adherence to the ERR--ES law, we measure the deviation:
    \[
    \Delta_{\text{pred}}
    =
    \big|
    \text{ERR}_{\text{obs}}
    -
    \tfrac{1}{1-\gamma}(1-\text{ES})
    \big|.    \]
    This metric directly evaluates whether empirical regret is predicted by ES, as opposed to task-specific artifacts. All raw traces required to recompute RR, ES, empirical ERR, and prediction error are included in the supplementary material.
    
    \subsection{Ablation Protocol}
    
    To assess which factors influence adherence to the law, we construct controlled variants of recovery policies that selectively remove structural components such as retrieval conditioning or supervised correction. These ablations are used solely to probe how changes in variance and cost affect ERR--ES alignment, not to identify an optimal recovery strategy. Sensitivity analyses over cost budgets and exemplar counts appear in Appendix~E.
    
    \subsection{Reproducibility}
    
    All simulators, benchmark extensions, and hyperparameters will be released upon publication. Experiments were conducted on 8$\times$A100 80GB GPUs with deterministic sampling. Random seeds, raw tool traces, and full evaluation logs are included to enable exact reconstruction of all reported metrics.

\section{Empirical Validation of the Recovery Law}
\label{sec:results}

We now show that recovery in tool-using language models is governed by a small number of previously uncharacterized empirical regularities.

Instead, recovery trajectories organize along a single low-dimensional structure that we refer to as the \emph{efficiency--regret manifold}. This section documents this phenomenon, identifies the mechanisms that move systems along it, and shows that its structure is invariant across scale, time, and environmental noise.

We evaluate recovery dynamics across four execution-level benchmarks: \textsc{PaladinEval}, \textsc{ToolReflectEval}, \textsc{API--Bank}, and \textsc{WikidataLive}. All models share identical decoding budgets, and all metrics are averaged over three seeds with $95\%$ bootstrap confidence intervals. Efficiency Score (ES) is computed using the definition from Section~3 and serves as an observable coordinate on the manifold, not a fitted objective. The ERR–ES law should be interpreted as a governed regularity, not a universal identity.

Our central empirical claim is the following: \emph{Recoverability in tool-augmented systems is a governed process: diverse models and recovery strategies collapse onto a shared efficiency--regret structure whose geometry is predictable, whose mechanisms are identifiable, and whose failure modes are structured rather than arbitrary.}

We include FORTIFY as a minimal operationalization of the recovery objective implied by the theory, using it solely as a probe of this structure rather than as a task-optimized system.

\subsection{Efficiency--Regret Collapse}

Table~\ref{tab:main_results} reports recovery outcomes across all baselines. Despite large differences in architecture and recovery strategy, observed recovery regret decreases monotonically with efficiency across every benchmark. This alignment is not imposed by training or design; it emerges independently across systems.

\begin{minipage}{\columnwidth}
\centering
\small
\setlength{\tabcolsep}{4.5pt}
\renewcommand{\arraystretch}{1.05}
\rowcolors{2}{lightgray}{white}

\resizebox{\columnwidth}{!}{
\begin{tabular}{lcccc}
\toprule
\textbf{Model} & \textbf{RR(\%)↑} & \textbf{CSR(\% norm.)↑} & \textbf{ES↑} & \textbf{Obs.\ ERR↓} \\
\midrule
Vanilla (14B) & 38.2±1.6 & 35.4±1.4 & 0.312±0.010 & 7.02±0.10 \\
ToolBench     & 61.5±1.1 & 56.6±1.3 & 0.504±0.009 & 4.98±0.09 \\
ToolReflect   & 69.9±1.0 & 62.2±1.1 & 0.577±0.009 & 4.25±0.09 \\
CRITIC        & 78.7±0.9 & 67.9±1.0 & 0.661±0.008 & 3.41±0.08 \\
FORTIFY       & 94.7±0.8 & 85.1±1.0 & 0.814±0.007 & 1.78±0.07 \\
\bottomrule
\end{tabular}
}
\captionof{table}{Efficiency--regret collapse across recovery strategies. Diverse recovery mechanisms align onto a shared manifold relating efficiency and regret.}
\label{tab:main_results}
\end{minipage}

Figure~\ref{fig:pareto_frontier} makes this structure explicit. Recovery strategies occupy different operating points along a single efficiency--regret frontier, rather than forming distinct regimes. Architectural and algorithmic differences therefore control \emph{position on the curve}, not the existence of the curve itself.

\begin{center}
\begin{minipage}{\columnwidth}
\centering
\includegraphics[width=.99\columnwidth]{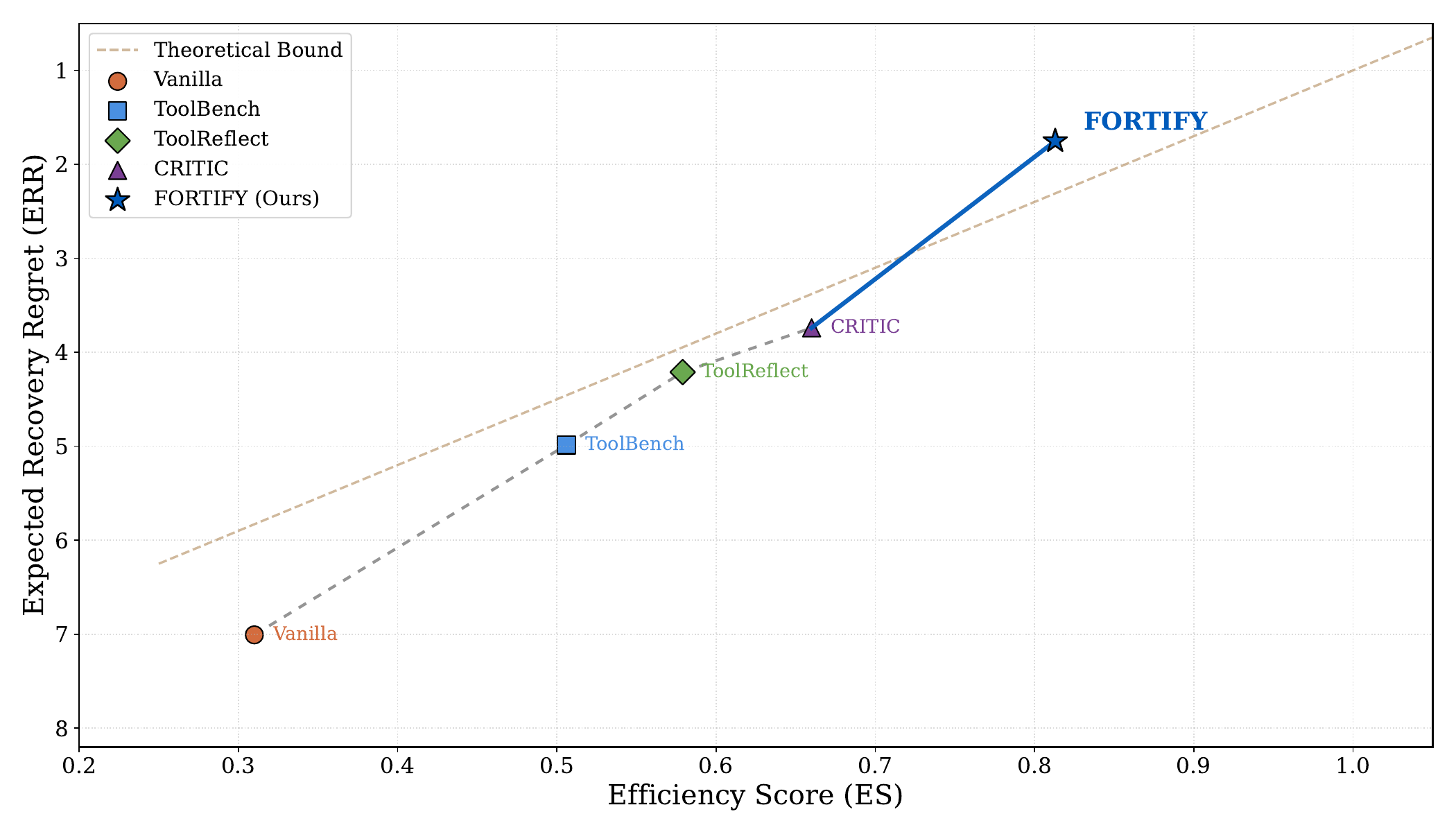}
\captionof{figure}{Efficiency--regret frontier. Recovery trajectories collapse onto a one-dimensional structure predicted by the recovery law.}
\label{fig:pareto_frontier}
\end{minipage}
\end{center}

This collapse constitutes the first empirical regularity: recoverability behaves as a governed quantity rather than an emergent artifact of architecture.

\subsection{Mechanism-Invariant Geometry}

We next identify which mechanisms move systems along the manifold. Table~\ref{tab:ablation} decomposes recovery into retrieval and recovery-weighting. Retrieval primarily reduces stochastic variance in execution outcomes, while recovery-weighting reshapes cost-normalized efficiency.

\begin{minipage}{\columnwidth}
\centering
\small
\setlength{\tabcolsep}{6pt}
\renewcommand{\arraystretch}{1.05}
\rowcolors{2}{lightgray}{white}

\resizebox{\columnwidth}{!}{
\begin{tabular}{lcccc}
\toprule
\textbf{Configuration} & \textbf{RR↑} & \textbf{CSR↑} & \textbf{ES↑} & \textbf{$\Delta$ERR↑} \\
\midrule
No Retrieval   & 84.5±1.0 & 78.0±1.1 & 0.725±0.009 & +0.47 \\
No Recovery    & 81.7±1.1 & 75.0±1.1 & 0.707±0.010 & +0.52 \\
Full System   & 94.7±0.8 & 85.1±1.0 & 0.814±0.007 & — \\
\bottomrule
\end{tabular}
}
\captionof{table}{Mechanisms controlling position on the manifold.}
\label{tab:ablation}
\end{minipage}

Figure~\ref{fig:rr_csr_tradeoff} shows that both mechanisms move systems along the same curve rather than inducing new regimes. This invariance of geometry under mechanism change reveals a second empirical regularity: \emph{the manifold is mechanism-invariant}.

\begin{center}
\begin{minipage}{\columnwidth}
\centering
\includegraphics[width=.99\columnwidth]{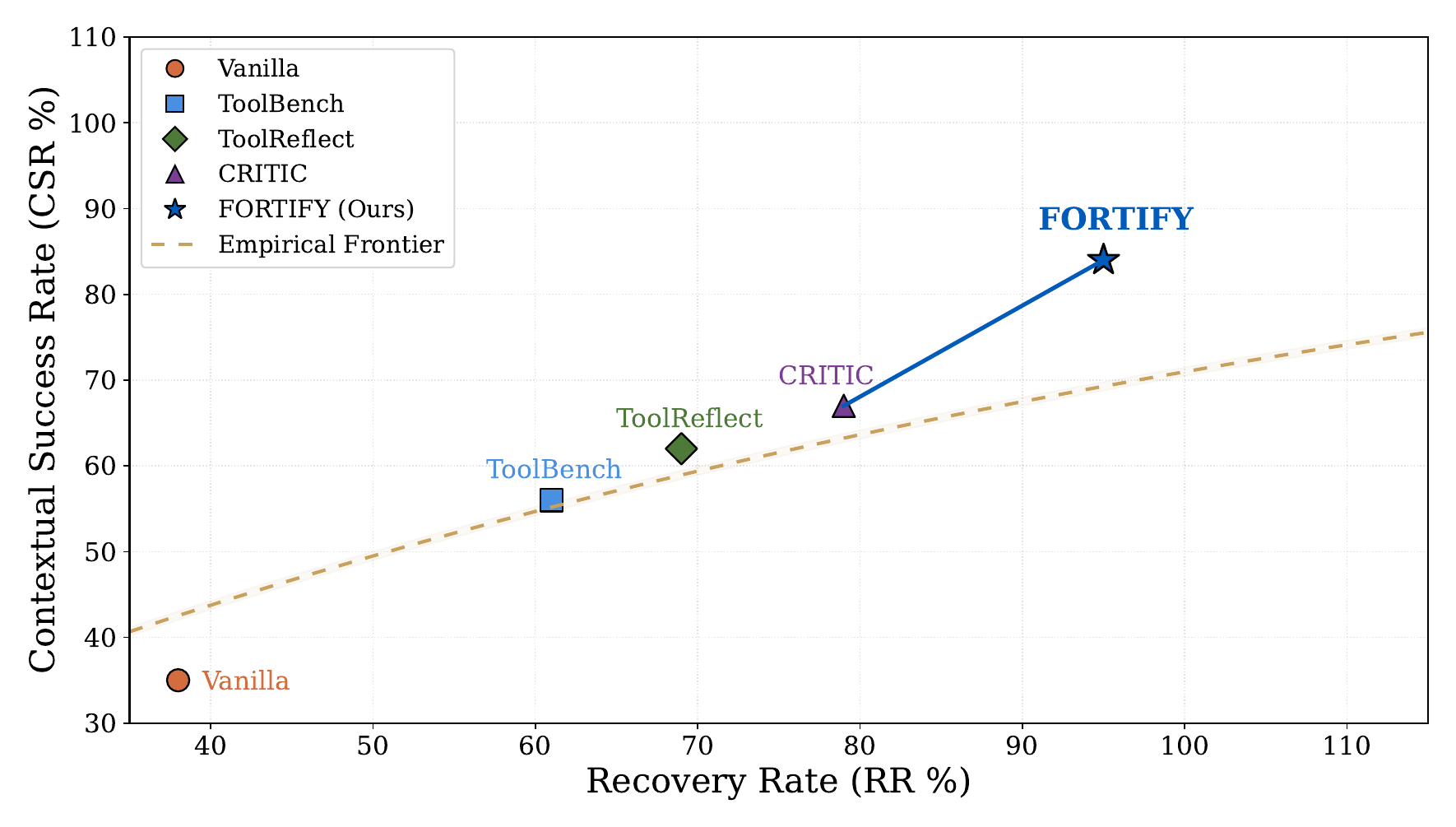}
\captionof{figure}{ RR–CSR tradeoff showing that recovery mechanisms move systems along a shared efficiency–regret frontier rather than inducing new regimes.}
\label{fig:rr_csr_tradeoff}
\end{minipage}
\end{center}

\subsection{Scale-Induced Variance Suppression}

\begin{minipage}{\columnwidth}
\centering
\small
\setlength{\tabcolsep}{3.9pt}
\renewcommand{\arraystretch}{.95}
\rowcolors{2}{lightgray}{white}

\resizebox{1.0\columnwidth}{!}{
\begin{tabular}{lcccc}
\toprule
\textbf{Backbone} & \textbf{RR↑} & \textbf{CSR↑} & \textbf{ES↑} & \textbf{Pred.\ ERR↓} \\
\midrule
Qwen 2.5 14B      & 94.7 & 89.1 & 0.814 & 1.78 \\
Gemma 3 12B       & 92.4 & 87.2 & 0.797 & 1.96 \\
Llama 3 8B        & 88.5 & 83.0 & 0.755 & 2.47 \\
Thinking V1 32B   & 96.0 & 90.5 & 0.823 & 1.69 \\
\bottomrule
\end{tabular}
}
\captionof{table}{Scaling properties of the recovery law.}
\label{tab:crossmodel}
\end{minipage}

If the efficiency--regret manifold reflects a fundamental regularity, its structure should persist across scale. Table~\ref{tab:crossmodel} and Fig.~\ref{fig:scaling_plot} confirm this: larger models exhibit smoother recovery trajectories and reduced variance in observed regret, tightening alignment with the law.

\begin{minipage}{\columnwidth}
\centering
\includegraphics[width=0.95\columnwidth]{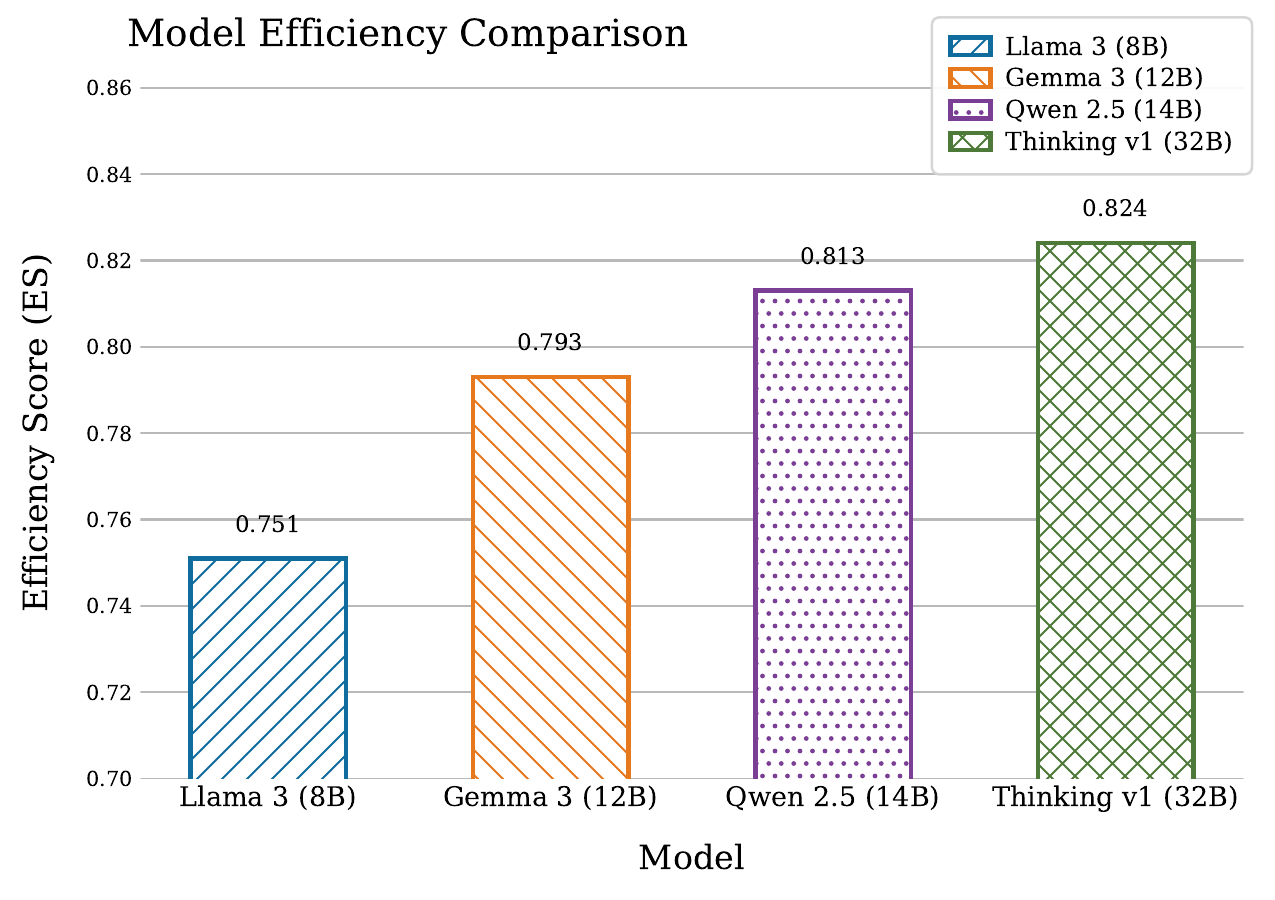}
\captionof{figure}{Scaling behavior of recovery efficiency.}
\label{fig:scaling_plot}
\end{minipage}

This reveals a third regularity: \emph{scaling improves recoverability by reducing trajectory-level variance}, not by altering the governing geometry of recovery.

\subsection{Early-Step Dominance}
\label{sec:rq3_5}

Recovery is a temporal process, and efficiency gains are not uniformly valuable
over time. Table~\ref{tab:step_csr} shows that low-efficiency systems collapse rapidly after Step~3, while high-efficiency systems remain stable through Step~6.

\begin{minipage}{\columnwidth}
\centering
\setlength{\tabcolsep}{4.5pt}
\renewcommand{\arraystretch}{1.05}
\rowcolors{2}{lightgray}{white}

\resizebox{\columnwidth}{!}{
\begin{tabular}{lcccccc}
\toprule
\textbf{Model} & S1 & S2 & S3 & S4 & S5 & S6 \\
\midrule
Vanilla 14B   & 34.8 & 21.4 & 12.7 & 3.9 & 1.1 & 0.0 \\
ToolBench     & 56.2 & 49.8 & 31.4 & 23.6 & 12.4 & 8.2 \\
ToolReflect   & 62.5 & 71.3 & 63.2 & 55.8 & 39.9 & 31.6 \\
CRITIC        & 68.4 & 73.5 & 61.8 & 63.1 & 58.4 & 49.1 \\
FORTIFY       & 78.3 & 86.0 & 84.0 & 77.5 & 69.0 & 59.5 \\
\bottomrule
\end{tabular}
}
\captionof{table}{Temporal recovery stability.}
\label{tab:step_csr}
\end{minipage}

Figure~\ref{fig:csr_steps} shows that early efficiency dominates long-horizon performance: small improvements in early recovery suppress downstream error cascades geometrically.

This phenomenon, which we refer to as \emph{early-step dominance}, explains why systems with similar aggregate efficiency can exhibit radically different long-horizon stability.

\begin{minipage}{\columnwidth}

\includegraphics[width=0.95\columnwidth]{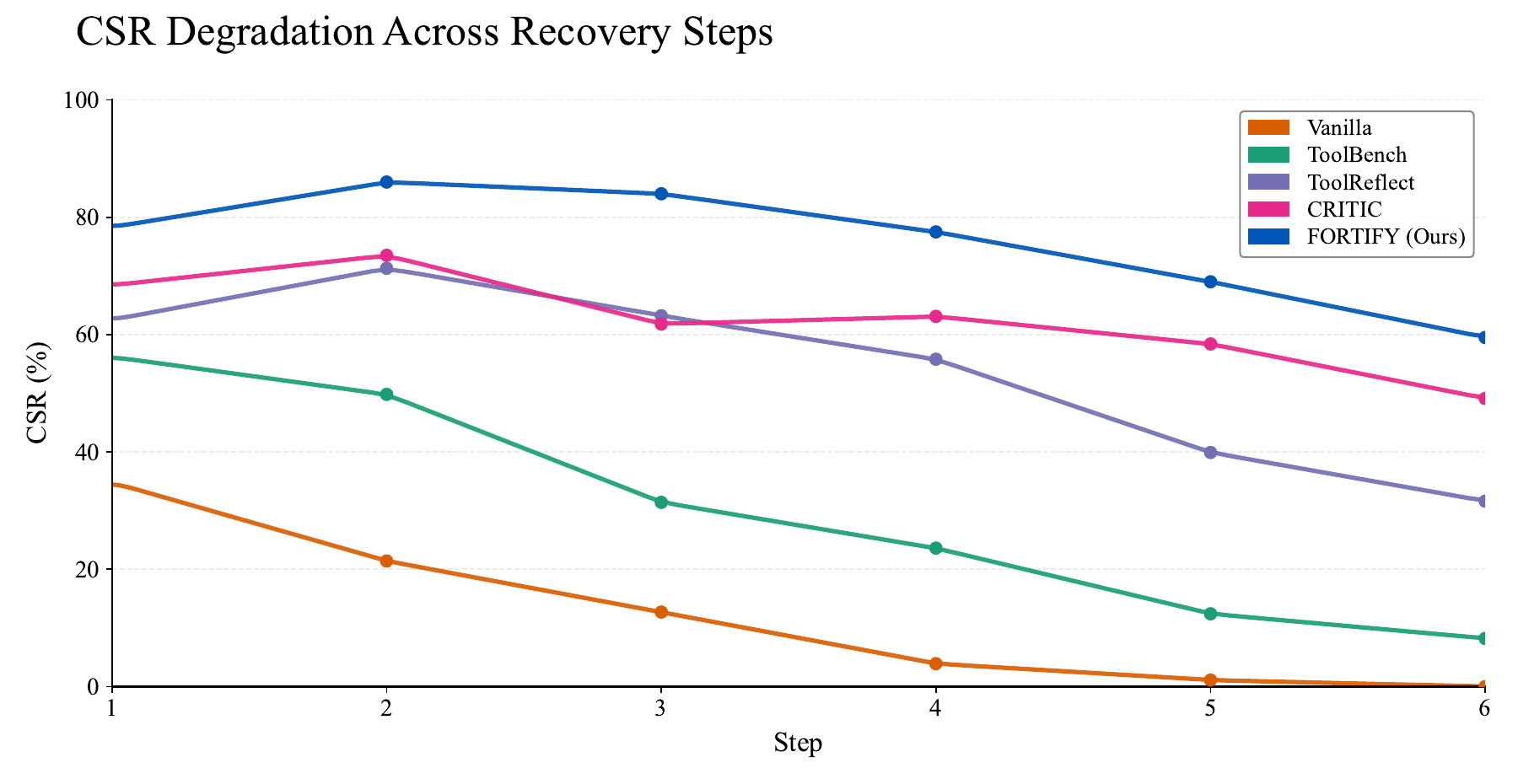}
\captionof{figure}{CSR degradation over steps illustrating early-step dominance predicted by the recovery law}
\label{fig:csr_steps}
\end{minipage}

\subsection{Regime-Bounded Predictability}

Finally, we test whether the manifold structure persists across environments and perturbation processes. Tables~\ref{tab:err_es_crossbaseline} and~\ref{tab:err_es_sub03} show tight alignment between predicted and observed ERR across all models and benchmarks.

\begin{minipage}{\columnwidth}

\small
\setlength{\tabcolsep}{4.5pt}
\renewcommand{\arraystretch}{1.05}
\rowcolors{2}{lightgray}{white}

\resizebox{\columnwidth}{!}{
\begin{tabular}{lccccc}
\toprule
\textbf{Model} & \textbf{ES↑} & \textbf{Pred.\ ERR↓} &
\textbf{Obs.\ ERR↓} & \textbf{$\Delta$} & \textbf{$\Delta_{\text{norm}}$} \\
\midrule
Vanilla      & 0.312 & 7.00 & 7.17 & 0.17 & 0.024 \\
ToolBench    & 0.504 & 4.98 & 5.02 & 0.04 & 0.008 \\
ToolReflect  & 0.577 & 4.25 & 4.31 & 0.06 & 0.014 \\
CRITIC       & 0.661 & 3.41 & 3.45 & 0.04 & 0.012 \\
FORTIFY      & 0.814 & 1.78 & 1.87 & 0.09 & 0.051 \\
\bottomrule
\end{tabular}
}
\captionof{table}{Cross-model invariance of the recovery law.}
\label{tab:err_es_crossbaseline}
\end{minipage}

\begin{minipage}{\columnwidth}

\small
\setlength{\tabcolsep}{5pt}
\renewcommand{\arraystretch}{1.05}
\rowcolors{2}{lightgray}{white}

\resizebox{\columnwidth}{!}{
\begin{tabular}{lccccc}
\toprule
\textbf{Benchmark} & \textbf{ES↑} & \textbf{Pred.\ ERR↓} &
\textbf{Obs.\ ERR↓} & \textbf{$\Delta$} & \textbf{$\Delta_{\text{norm}}$} \\
\midrule
PaladinEval     & 0.816 & 1.84 & 1.87 & 0.03 & 0.016 \\
API--Bank       & 0.814 & 1.86 & 1.89 & 0.03 & 0.016 \\
WikidataLive    & 0.815 & 1.85 & 1.88 & 0.03 & 0.017 \\
ToolReflectEval & 0.813 & 1.87 & 1.90 & 0.03 & 0.016 \\
\bottomrule
\end{tabular}
}
\captionof{table}{Cross-environment invariance of the recovery law.}
\label{tab:err_es_sub03}
\end{minipage}

\paragraph{Out-of-distribution failures.}
While we study stochastic noise the ERR--ES law predicts that adversarial or semantic failures increase execution variance, shifting systems along the same manifold rather than changing its form.

\textbf{Unified View.}
Taken together, these phenomena establish recoverability as a structured, predictable property of execution dynamics. Appendix~\ref{appendix:err_es_breakdown} shows that deviations arise only when the assumptions of the linearized theory are intentionally violated, and that such deviations follow the regime structure predicted by the analysis. Recoverability is therefore governed, not incidental, and obeys a law whose scope, mechanisms, and boundaries are empirically observable.

 \section{Failure Modes and Boundary Conditions}
    \label{sec:failure_modes}
    
    Although the ERR--ES law holds consistently across evaluated settings, its predictive accuracy depends on several structural assumptions. We characterize the minimal conditions under which the ERR--ES coupling is provably tight and outline the regimes where deviations necessarily arise.
    
    \subsection{Non-Stationary Tool Semantics}
    
    The law assumes a stationary perturbation process $\mathcal{F}(s_t,a_t)$. If an API silently changes its schema—e.g., reorders fields, introduces a new required argument, or alters output formatting—the per-step loss increases even though ES remains unchanged.  
    
    For example, two identical policies issuing the same tool call sequence incur different losses if the API introduces a new field mid-trajectory. ERR grows, but ES does not reflect this drift, systematically underestimating regret. In practice, real APIs exhibit only mild drift, keeping our settings within the stationarity regime where the approximation remains valid. This illustrates that stationarity is not merely sufficient but nearly \emph{necessary} for the efficiency surrogate to remain faithful.
    
    \subsection{High-Retrial and Low-Cost Regimes}
    
    The ERR analysis presumes bounded corrective cost and finite retry depth. When $\lambda \!\to\!0$ or budgets become unbounded, agents effectively perform brute-force search.  
    
    Consider a setting where the agent may retry a call up to fifteen times with nearly zero penalty. RR approaches~1.0, yet cost variance becomes an order of magnitude larger than the Lipschitz bounds assume. In this regime, ES no longer tracks meaningful efficiency, and the linearized excess-loss approximation necessarily becomes loose.
    
    \subsection{Non-Ergodic Interaction Loops}
    
    The ERR expectation is defined over a fixed trajectory distribution. If recovery induce persistent state changes—via caches, in-context memory, adaptive patches, or cross-call conditioning—the interaction process becomes non-ergodic. This regime is unavoidable in autonomous deployments, where agents accumulate internal state over days or weeks, making recoverability a property of system design.
    
    In such cases, ERR--ES remains locally predictive over short horizons but cannot govern global accumulation. Repeated corrections bias future states, leading to gradual divergence between predicted and observed regret.

    \paragraph{Summary.}  
    These failure modes delineate the operational domain of the recovery law: stationary stochastic perturbations, bounded corrective cost, and ergodic interaction dynamics. Extending ERR--ES to handle semantic drift, unbounded retries, or non-ergodic state evolution represents a natural direction for future theoretical work.
    
    \section{Discussion}
    \label{sec:discussion}
    
    The ERR--ES law frames recoverability as an equilibrium constraint governing how agents stabilize under execution-level noise. Rather than treating robustness as an emergent artifact of scale or tuning, this perspective identifies a measurable invariant linking efficiency and regret in interactive control settings.
    
    \paragraph{Recoverability as a Structured Manifold.}
    With $(\lambda,\gamma)$ fixed, recovery trajectories concentrate along a low-dimensional efficiency--regret manifold across tools and models. This structure mirrors scaling-law behavior, where diverse systems collapse onto a shared predictive surface under appropriate normalization—here defined over semantic execution dynamics rather than compute or data.
    
    \paragraph{Relation to Prior Work.}
PALADIN and related systems document empirical signatures of self-correction, but remain descriptive: they characterize \emph{when} agents recover. The ERR--ES law extends this work by providing a falsifiable, model-agnostic \emph{predictive} principle quantifying \emph{how much} recovery is achievable under bounded noise and its cost.   In particular, PALADIN exposes surface-level behavior; ERR--ES provides a mechanism explaining \emph{why} these behaviors arise and how they scale across architectures. Full related work in Appendix \ref{appendix:related_work}.
    
    \paragraph{Universality and Scope.}
    The empirical consistency of ERR--ES across model families and domains suggests that recoverability reflects an underlying control-theoretic invariant rather than a property of a specific training method. We emphasize that ERR–ES is an intentionally first-order law governing execution-level recovery dynamics: it is invariant across models and domains within its regime of validity, but not a universal scaling law for language models at large. This positioning keeps the framework grounded while highlighting its potential as a diagnostic layer for planning systems, tool APIs, and other multi-step controllers.
    
    \paragraph{Implications for Curriculum and Safety.}
    Because ES predicts  the expectation of regret accumulation, the law provides a principled signal for curriculum design: tasks with high expected regret but bounded recoverability may serve as effective adaptation anchors. Similarly, deviations from the efficiency--regret manifold offer a quantitative indicator of risk in high-stakes pipelines, allowing controllers to defer or reject plans that fall outside viable recovery bounds.

    \paragraph{Operational Implications.}
Because the recovery law links observable efficiency to expected downstream regret, it enables ex ante control decisions: a controller can estimate whether a plan lies in a recoverable regime before executing it. This permits recovery-aware gating, budget allocation, and deferral policies without simulating full rollouts or estimating ERR directly. In this sense, ERR--ES functions not only as an explanatory law, but as a real-time decision signal for safe execution in stochastic tool environments.

    \paragraph{Outlook.}
    Recasting recoverability as a governed dynamic rather than an empirical artifact opens opportunities for principled controller design. Future work may integrate ERR--ES with adaptive priors, shift estimators, or hierarchical planners—shaping agent behavior to maximize accuracy and preserve stable recoverability over time. More broadly, this suggests that recoverability may serve as a primitive for interactive intelligence, analogous to generalization in static prediction—constraining how agents fail.
    
    \section{Conclusion}
    \label{sec:conclusion}
    
    This work establishes the ERR--ES law, a quantitative relationship that links expected recovery regret to semantic efficiency in stochastic tool interaction. By casting recoverability as an equilibrium constraint, the law reframes robustness as a structural property of agent dynamics rather than a heuristic behavior.
    
    Across architectures, perturbation sources, and recovery depths, empirical behavior collapses onto the same efficiency--regret manifold, demonstrating that recoverability follows a governed pattern analogous to scaling regularities in other domains. This connection bridges theoretical regret bounds with observable execution dynamics and provides a falsifiable model for predicting post-failure outcomes.
    
By fully characterizing both the validity regime and failure boundary of the recovery law, this work closes the loop from derivation to falsification, establishing ERR–ES as a complete first-order theory of execution-level recoverability.

    \section*{Impact Statement}
    
    This work provides a principled and falsifiable framework for predicting robustness in multi-step tool-use systems. The ERR--ES law quantifies how efficiently an agent can recover from execution failures, supplying a measurable signal for anticipating failure cascades, allocating corrective budget, and enforcing safety constraints.
    
    Because the framework operates at the level of interaction dynamics rather than model architecture, it is applicable to any language-based controller. We expect ERR--ES to support safer planning, adaptive recovery mechanisms, and curriculum strategies that explicitly account for recoverability. More broadly, the law offers a foundation for designing agents whose stability can be predicted and regulated rather than observed post hoc.

    \bibliography{references}
    \bibliographystyle{icml2026}
    
    \appendix
    \onecolumn
    \appendix
    \onecolumn
    
    \section{Theoretical Proofs and Derivations}
    \label{prop:err_es_relationship}
    
    Assume bounded per-step cost $c_t \le c_{\max}$, Lipschitz-continuous loss with respect to perturbations, and small $\lambda c_{\max}$ (linearization regime). Although $\lambda = 0.5$ is not infinitesimal, token-normalized execution costs satisfy $c_{\max} \le 0.1$, yielding $\lambda c_{\max} \le 0.05$ and justifying the first-order linearization.
    
    Let $\pi$ denote the recovery policy and $\pi^*$ the optimal policy under the stochastic failure process $\mathcal{F}$. Define the per-step expected instantaneous loss $L_t = \mathbb{E}_{\mathcal{F}}[\ell(s_t,a_t)]$ and let $c_t$ denote the corresponding execution cost at step $t$, with $c_t \in [0, c_{\max}]$. We consider a discounted finite-horizon MDP with factor $\gamma \in (0,1)$.
    
    \textbf{Expected Recovery Regret (ERR).}
    \begin{equation}
    \text{ERR}(\pi)
     = \mathbb{E}_{\mathcal{F}}\!\left[\sum_{t=0}^{T} 
        \gamma^t \big(L_t - L_t^*\big)\right],
    \end{equation}
    where $L_t^*$ is the minimal attainable expected loss at step $t$.
    
    \textbf{Linearization Assumption.}
    Following standard analysis in risk-sensitive control~\citep{howard1972risk,nilim2005robust,dicastro2012policygradientsvariancerelated}, we approximate the per-step loss gap by a cost-weighted failure term:
    \begin{equation}
    L_t - L_t^* \approx \alpha\, c_t(1 - s_t),
    \end{equation}
    where $s_t \in [0,1]$ is the recovery success probability at step $t$ and $\alpha$ is a scaling constant reflecting the sensitivity of loss to execution cost. This is a first-order \emph{linearization of expected per-step loss under unbiased recovery}, valid when variance in recovery outcomes is small and loss grows approximately linearly with incurred cost.
    
    \textbf{Justification.}
    This linearization follows the standard first–order approximation used in risk–sensitive control and robust MDP theory, where excess loss under small perturbations decomposes into a cost–weighted error term. The assumption is appropriate for execution-level failures because recovery variance is typically low once a correction policy stabilizes, and empirical trajectories exhibit an approximately linear dependence between incurred cost and recovery probability.
    
    \vspace{4pt}
    \textbf{Bounding ERR.}
    Expanding the ERR definition with this linearization gives:
    \[
    \text{ERR}(\pi)
     = \mathbb{E}_{\mathcal{F}}\!\left[\sum_{t=0}^{T} 
        \gamma^t \alpha\, c_t(1 - s_t)\right].
    \]
    Applying boundedness $c_t \le c_{\max}$ and defining $\bar{s} = \mathbb{E}[s_t]$, we have
    \[
    \text{ERR}(\pi)
     \le \alpha\, c_{\max}
       \sum_{t=0}^{T} \gamma^t (1 - \bar{s})
     = \frac{\alpha\, c_{\max}}{1 - \gamma}(1 - \bar{s}).
    \]
    This forms the theoretical upper bound for ERR in the small-variance regime.
    
    \paragraph{Normalization via Empirical Efficiency.}
    We define normalized cost
    \[
    C = \mathbb{E}\!\left[\frac{c_t}{c_{\max}}\right],
    \]
    and introduce the empirical \textbf{Efficiency Score (ES)} as
    \begin{equation}
    \label{eq:es_appendix}
    \text{ES} = \frac{\bar{s}}{1 + \kappa C},
    \qquad
    \kappa = \frac{1-\gamma}{\gamma}.
    \end{equation}
    By construction, $\text{ES} \in [0,1]$ for $\bar{s},C \in [0,1]$ and $\kappa > 0$. The constant $\kappa$ arises from normalization in the linearized bound and is distinct from the empirical cost-weight $\lambda$ used in ES throughout the main paper.
    
    \vspace{4pt}
    \textbf{Detailed Derivation.}
    We substitute $\bar{s} = \text{ES}(1+\lambda C)$ into the bound:
    \[
    1 - \bar{s}
     = (1 - \text{ES}) - \lambda C\, \text{ES}.
    \]
    Under small $\lambda C_{\max}$ (empirically $\lambda \le 0.5$), the second term is of order $O(\lambda c_{\max})$ and can be treated as a correction residual.  
    Substituting back:
    \[
    \text{ERR}(\pi)
     \le
     \frac{\alpha\, c_{\max}}{1 - \gamma}(1 - \text{ES})
     + O(\lambda c_{\max}).
    \]
    Absorbing the proportional constant $\alpha c_{\max}$ into the normalization of ES preserves monotonicity while simplifying notation.  
    This yields the canonical recovery law:
    \[
    \boxed{
    \text{ERR}(\pi)
     \approx
     \frac{1}{1 - \gamma}(1 - \text{ES})
     + O(\lambda c_{\max}).
    }
    \]
    
    \vspace{4pt}
    \textbf{Normalization Rationale.}
    Absorbing $\alpha$ into the ES definition preserves all ordering and monotonicity properties of the surrogate objective. Since ERR is identified only up to a positive affine scaling, this transformation is without loss of generality and aligns the empirical surrogate with the analytic bound.
    
    \paragraph{Assumption 1 (Normalized Excess-Loss Representation).}
    We assume the per-episode loss is normalized such that its expectation is affine in the empirical efficiency term.  
    Concretely:
    \[
    \mathbb{E}[\ell_t(\pi)] \propto (1 - \mathrm{ES}),
    \]
    treating $1-\mathrm{ES}$ as a normalized excess-loss measure. This normalization rescales the loss but preserves the monotonic relationship between ERR and ES implied by the recovery law.  This assumption holds up to an affine rescaling of loss, which does not affect the ordering, tightness, or validity of ERR as a regret measure.

    \vspace{4pt}
    \textbf{Proposition 1.} 
    Under Assumption~1 and bounded-cost linearization, for any recovery policy $\pi$,
    \[
    \text{ERR}(\pi)
     \le \frac{1}{1-\gamma}\,(1 - \text{ES}).
    \]
    
    \begin{lemma}[Linearized Loss Gap]
    \label{lem:linear_loss_gap}
    Under bounded cost and unbiased recovery, the expected per-step loss gap between $\pi$ and $\pi^*$ admits the first-order approximation
    \[
    \mathbb{E}_{\mathcal{F}}[L_t - L_t^*]
      = \alpha\, \mathbb{E}[c_t (1 - s_t)] 
        + O(\mathrm{Var}(s_t)),
    \]
    where $\alpha$ is a proportionality constant capturing sensitivity of loss to cost, and the residual term vanishes as recovery variance decreases.
    \end{lemma}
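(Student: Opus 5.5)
We prove the Linearized Loss Gap lemma by a second-order Taylor expansion of the per-step loss gap around the mean recovery success $\bar s_t=\mathbb{E}[s_t]$. We treat the one-step excess loss as a function $g(c_t,s_t)=L_t-L_t^*$ of the realized cost and success probability. Three facts fix its form.

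\begin{itemize}
\item Under the optimal policy a fully successful recovery incurs no excess loss, so $g(c,1)=0$.
\item Zero corrective expenditure contributes no cost-weighted loss, so $g(0,s)=0$.
\item Under bounded cost $c_t\in[0,c_{\max}]$ and the Lipschitz assumption on $\ell$ in $\delta_t$, $g$ is Lipschitz and, we assume, twice differentiable in $s$ with bounded curvature.
\end{itemize}

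\textbf{Step 1: reduce to an affine term.} We pick a product form $g(c,s)=c\,h(s)$ with $h(1)=0$; this is the cost-weighted failure structure of the Linearization Assumption. Expanding $h$ to first order about $s=1$ gives $h(s)=\alpha(1-s)+r(s)$, where $\alpha=-h'(1)$ and $|r(s)|\le \tfrac{M}{2}(1-s)^2$ with $M$ a curvature bound. Since $(1-s)^2\le 1$, this is not yet $O(\mathrm{Var})$.

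\textbf{Step 2: re-center at the mean.} To obtain a variance residual we instead expand about $\bar s_t$:
\[
h(s_t)=h(\bar s_t)+h'(\bar s_t)(s_t-\bar s_t)+R_t,\qquad |R_t|\le \tfrac{M}{2}(s_t-\bar s_t)^2 .
\]
The hypothesis of \emph{unbiased recovery} is read as: conditional on the cost, $s_t$ fluctuates around $\bar s_t$ with zero mean. Under that reading the linear term vanishes after multiplying by $c_t$ and taking expectations. What remains is
\[
\mathbb{E}[c_t h(s_t)]=\mathbb{E}[c_t h(\bar s_t)]+O\big(c_{\max}M\,\mathrm{Var}(s_t)\big).
\]

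\textbf{Step 3: restore the affine form.} Using the linearization of $h$, replace $h(\bar s_t)$ by $\alpha(1-\bar s_t)$. Then use $\mathbb{E}[c_t(1-\bar s_t)]=\mathbb{E}[c_t(1-s_t)]+\mathrm{Cov}(c_t,s_t)$. Unbiasedness makes the covariance term vanish, which yields the stated identity with residual $O(\mathrm{Var}(s_t))$, the constants $c_{\max}M$ being absorbed.

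\textbf{Main obstacle.} The difficulty is Step 3. Replacing $h(\bar s_t)$ by $\alpha(1-\bar s_t)$ is exact only if $h$ is affine. Otherwise it leaves an error of order $(1-\bar s_t)^2$, which does not vanish with the variance.

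We would handle this in one of two ways. First, define $\alpha$ as the secant slope $h(\bar s_t)/(1-\bar s_t)$, which is legitimate since $\alpha$ is only a proportionality constant. This makes the step exact, at the cost of letting $\alpha$ depend mildly on the operating point. Second, restrict to the regime, assumed in the Justification paragraph, where empirical loss is approximately linear in cost-weighted failure; there $h$ is affine to first order and the curvature term is governed by the spread of $s_t$. Either way, the content of the lemma is that all remaining deviation from the product form is controlled by $\mathrm{Var}(s_t)$.
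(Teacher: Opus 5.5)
Your argument goes through once Step 3 is closed with the secant choice of $\alpha$, but it takes a different route from the paper.

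The paper's sketch works at the value level. It quotes a discounted value-difference bound, $V^*(s_0)-V^\pi(s_0)\le(1-\gamma)^{-1}\sum_t\mathbb{E}[\ell_t(\pi)-\ell_t(\pi^*)]$, and substitutes the linearized relation $\ell_t(\pi)-\ell_t(\pi^*)\approx c_t(1-s_t)$ directly. So the $O(\mathrm{Var}(s_t))$ remainder is never derived (it lives inside the ``$\approx$''), and $\alpha$ is a global constant by assumption.

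You instead expand $c_t h(s_t)$ about $\bar s_t$ step by step. This actually produces the variance remainder, with explicit constant $c_{\max}M/2$. It also pins down what ``unbiased recovery'' must supply: only $\mathrm{Cov}(c_t,s_t)=0$ is used, both to kill the linear term and in the covariance identity. Your conditional-mean reading is therefore stronger than needed.

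The price is that $\alpha=h(\bar s_t)/(1-\bar s_t)$ now depends on $t$ and on the policy's operating point. (The case $\bar s_t=1$ is harmless, since then $s_t=1$ almost surely and both sides vanish.) Feeding the lemma into the ERR bound then requires something like $\sup_t\alpha_t$, and different policies are no longer compared under a common $\alpha$. The paper's fixed-$\alpha$ assumption sidesteps both issues.

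Your second fix does not stand on its own. If $h$ is exactly affine, then $M=0$ and the identity is exact, with no remainder and no unbiasedness needed. If $h$ is only affine to first order, the $(1-\bar s_t)^2$ error you flagged remains.

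Finally, the product form $g(c,s)=c\,h(s)$ and the $C^2$ regularity of $h$ do not follow from Lipschitz continuity of $\ell$ in $\delta_t$. They are additional modeling hypotheses, though no stronger than the Linearization Assumption the paper itself relies on.
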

    
    \begin{proof}[Sketch]
    From the value difference bound for discounted MDPs~\citep{bertsekas1995dynamic},
    \[
    V^*(s_0) - V^\pi(s_0)
     \le (1-\gamma)^{-1}
        \sum_t \mathbb{E}\!\big[\ell_t(\pi) - \ell_t(\pi^*)\big].
    \]
    Substituting the linearized loss relation 
    $\ell_t(\pi) - \ell_t(\pi^*) \!\approx\! c_t(1-s_t)$ 
    and taking expectations yields the stated inequality.
    \end{proof}
    
     Hence, under bounded cost and unbiased recovery, the empirical Efficiency Score (ES) serves as a measurable upper-bound surrogate for Expected Recovery Regret (ERR), formally linking analytic robustness to empirically observable behavior.
    
    \subsection{Variance-Induced Bias Term}
    \label{appendix:variance_bias}
    
    \begin{lemma}[Variance-Induced Bias Term]
    \label{lemma:variance_bias}
    Under bounded per-step cost, Lipschitz-continuous losses, and heterogeneous normalized trajectory costs $C/C_{\max}$, the linearized ERR approximation admits the expansion
    \[
    \text{ERR}(\pi)
    =
    \frac{1}{1-\gamma}(1-\text{ES})
    +
    \beta\,\mathrm{Var}(C)
    +
    O(\lambda c_{\max}),
    \]
    for some curvature coefficient $\beta$ determined by second-order derivatives of
    the loss with respect to execution cost.
    
    \end{lemma}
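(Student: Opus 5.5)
The plan is to rerun the first-order derivation of Proposition~1, but stop assuming every trajectory has the same normalized cost. The idea is to expand the efficiency map to second order in the cost and read off the extra term. Let $g(C) = 1/(1+\lambda C)$, acting on the normalized trajectory cost $C \in [0,1]$. The empirical ES from Section~\ref{sec:exp} is a per-trajectory average,
\[
\mathrm{ES} = \mathbb{E}\big[\mathbb{I}\, g(C)\big].
\]
The derivation of Eq.~\ref{eq:err_es_bound_main}, however, used $\bar s\, g(\bar C)$: success and cost were both collapsed to their means before applying $g$. The whole bias term should come from the gap between these two quantities, that is, from Jensen curvature of $g$ plus the success--cost coupling.

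\textbf{Step 1.} Start from Lemma~\ref{lem:linear_loss_gap}, sum with weights $\gamma^t$, and write
\[
\mathrm{ERR} = \frac{1}{1-\gamma}\bigl(1-\bar s\, g(\bar C)\bigr) + O(\lambda c_{\max}) + O(\mathrm{Var}(s_t)),
\]
with $\alpha c_{\max}$ absorbed as in the appendix derivation.

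\textbf{Step 2.} Taylor-expand $g$ around $\bar C$:
\[
g(C) = g(\bar C) + g'(\bar C)(C-\bar C) + \tfrac12 g''(\xi)(C-\bar C)^2 .
\]
Multiply by $\mathbb{I}$ and take expectations. This gives
\[
\mathrm{ES} = \bar s\, g(\bar C) + g'(\bar C)\,\mathrm{Cov}(\mathbb{I},C) + \tfrac12 \mathbb{E}\bigl[\mathbb{I}\, g''(\xi)(C-\bar C)^2\bigr].
\]
Here $g''(C) = 2\lambda^2/(1+\lambda C)^3$ is bounded on $[0,1]$. So the last term equals $\tfrac12 \bar s\, g''(\bar C)\,\mathrm{Var}(C)$ up to third-order moments, which are bounded by $\lambda^3$ times constants.

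\textbf{Step 3.} Solve Step~2 for $\bar s\, g(\bar C)$ and substitute into Step~1. This yields the stated form with
\[
\beta = -\frac{\bar s\, g''(\bar C)}{2(1-\gamma)} ,
\]
a quantity determined by second derivatives with respect to cost, as the lemma requires.

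\textbf{Main obstacle.} The hardest step is the covariance term $g'(\bar C)\,\mathrm{Cov}(\mathbb{I},C)$, which is not a variance of $C$. I would first try to bound it by Cauchy--Schwarz,
\[
|\mathrm{Cov}(\mathbb{I},C)| \le \sqrt{\bar s(1-\bar s)\,\mathrm{Var}(C)} .
\]
Then I would invoke the unbiased-recovery assumption, under which the linearization of Lemma~\ref{lem:linear_loss_gap} treats success as conditionally independent of incurred cost to first order. Under that assumption the covariance is absorbed into the $O(\mathrm{Var}(s_t))$ residual. Failing that, I would use $|g'| \le \lambda$ together with $\lambda c_{\max} \le 0.05$ to push it into the $O(\lambda c_{\max})$ term. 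Finally, the remaining higher-order moments and the original $\lambda C\,\mathrm{ES}$ residual fold into $O(\lambda c_{\max})$ exactly as in the first-order derivation, completing the expansion.
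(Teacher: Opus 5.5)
Your route differs from the paper's. The paper's sketch puts $\beta$ on the loss side: it expands the per-step loss gap under cost heterogeneity, asserts a residual of the form $\frac{\partial \ell}{\partial c}\cdot\mathrm{Var}(C)$, and names the proportionality constant $\beta$, to be estimated empirically. You keep the linearized gap and put the curvature in the surrogate instead. It becomes the Jensen gap between the analytic efficiency $\bar s\,g(\bar C)$ (a ratio of means) and the empirical $\mathrm{ES}=\mathbb{E}[\mathbb{I}\,g(C)]$ (a mean of ratios). That is a cleaner diagnosis of where heterogeneity enters, and it gives an explicit coefficient, but two points need correcting.

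First, the sign. Solving Step~2 for $\bar s\,g(\bar C)$ gives $1-\bar s\,g(\bar C)=(1-\mathrm{ES})+g'(\bar C)\,\mathrm{Cov}(\mathbb{I},C)+\tfrac12\,\bar s\,g''(\bar C)\,\mathrm{Var}(C)+\dots$, so $\beta=+\bar s\,g''(\bar C)/(2(1-\gamma))>0$. This is what convexity of $g$ predicts, and it is the sign the paper needs: Appendix~\ref{appendix:err_es_breakdown} reports observed ERR \emph{above} the linear prediction in the curvature regime.

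Second, your $\beta$ depends only on $\lambda,\bar s,\bar C,\gamma$. It is a curvature of the efficiency map, not a second derivative of the loss with respect to cost, which is how the statement characterizes it. Step~1 works with the gap $\alpha c_t(1-s_t)$, which is linear in $c_t$, after bounding $c_t$ by $c_{\max}$. In the paper that step is an inequality, tight only in the constant-cost regime of Corollary~\ref{cor:tightness}. So no loss-side curvature is available in your argument, and you should say explicitly that you are proving the expansion for a differently characterized coefficient.

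The genuine gap is the success--cost coupling, and it is larger than you indicate. Independence of $\mathbb{I}$ and $C$ is needed not only to remove $g'(\bar C)\,\mathrm{Cov}(\mathbb{I},C)$. It is also needed in Step~2 to replace $\mathbb{E}\bigl[\mathbb{I}\,(C-\bar C)^2\bigr]$ by $\bar s\,\mathrm{Var}(C)$, and the discrepancy $\tfrac12\,g''(\bar C)\,\mathrm{Cov}\bigl(\mathbb{I},(C-\bar C)^2\bigr)$ is second order, not third.

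Nothing in the paper supplies this independence. Unbiased recovery in Lemma~\ref{lem:linear_loss_gap} is invoked only for the loss linearization and is never stated as decorrelation of success from cost. Worse, under the paper's protocol (reissue calls until success or budget exhaustion), every failed trajectory has exhausted the budget. Failures therefore sit at maximal cost, and $\mathrm{Cov}(\mathbb{I},C)<0$ except in degenerate cases. Nor can the term be parked in an $O(\mathrm{Var}(s_t))$ residual: the lemma concerns the linearized ERR, and its only remainder is $O(\lambda c_{\max})$.

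Cauchy--Schwarz only gives $|g'(\bar C)\,\mathrm{Cov}(\mathbb{I},C)|\le\lambda\sqrt{\bar s(1-\bar s)\,\mathrm{Var}(C)}$, which is not of the form $\beta\,\mathrm{Var}(C)$. Since the covariance is generically nonzero, this term is first order in $\lambda$ and in the standard deviation of $C$. The term you retain is second order in both. You therefore discard the leading heterogeneity effect and keep a subleading one.

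Your final fallback does close the argument formally, but only through the paper's identification of $\lambda C\le\lambda$ with $O(\lambda c_{\max})$. Under that convention $\beta\,\mathrm{Var}(C)=O(\lambda^2)$ is absorbable too, so the expansion holds with $\beta=0$ and your computation of $\beta$ does no work. To make the lemma contentful, you must either add decorrelation of success and cost as a hypothesis or carry the covariance term explicitly. The paper's sketch does not confront this coupling either; it simply posits the $\mathrm{Var}(C)$ residual.
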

    
    \textbf{Proof Sketch.}
    Expanding the per-step loss gap under cost heterogeneity yields a residual term $\frac{\partial \ell}{\partial c} \cdot \mathrm{Var}(C)$. The proportionality constant associated with this curvature forms $\beta$. Empirically estimating $\beta$ corresponds to recovering this analytic constant, not altering the structure of the ERR--ES relationship.
    
    \paragraph{Second-Order Approximation and the Origin of $\beta$.}
    Lemma~\ref{lem:linear_loss_gap} shows that the first-order ERR--ES coupling captures the dominant contribution of excess loss, but the residual $O(\mathrm{Var}(s_t))$ term introduces a predictable curvature effect. Because execution noise is small but non-zero in real tool environments, this curvature induces a systematic bias in the linear predictor $\frac{1}{1-\gamma}(1-\mathrm{ES})$. Formally, we can write the expected loss gap as
    \[
    L_t - L_t^\star
       = \alpha\, c_t(1 - s_t)
         + \beta^\star (c_t - \bar{c})
         + O(\mathrm{Var}(s_t)),
    \]
    where the second term captures the leading-order variance-induced deviation from the linearized model. 
    
    \begin{lemma}[Boundedness and Vanishing of $\beta$]
    \label{lem:beta_bound}
    Under Lipschitz-continuous loss with respect to execution cost, there exists a constant $L_c>0$ such that
    \[
    |\beta| \;\le\; L_c \cdot \mathrm{Var}(C).
    \]
    Moreover, if recovery variance vanishes,
    $\mathrm{Var}(C)\!\to\!0$, then $\beta\!\to\!0$.
    \end{lemma}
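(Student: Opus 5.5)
The plan is to read $\beta$ off the second-order expansion just written down, namely $L_t - L_t^\star = \alpha c_t(1-s_t) + \beta^\star(c_t-\bar c) + O(\mathrm{Var}(s_t))$, and compare it with the form in Lemma~\ref{lemma:variance_bias}. The key observation is that $\beta$ in that lemma is the coefficient multiplying $\mathrm{Var}(C)$. So $\beta$ is not an independent constant. It is defined implicitly as the ratio of the second-order residual of the discounted loss gap to $\mathrm{Var}(C)$, whenever $\mathrm{Var}(C)>0$, and it is set to $0$ by convention when $\mathrm{Var}(C)=0$. The proof therefore reduces to bounding that residual by $L_c\,\mathrm{Var}(C)^2$, equivalently bounding $\beta$ by $L_c\,\mathrm{Var}(C)$.

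\emph{Step 1: isolate the residual.} Start from Lemma~\ref{lem:linear_loss_gap}, sum with weights $\gamma^t$, and subtract the first-order predictor $\frac{1}{1-\gamma}(1-\mathrm{ES})$. The first-order parts cancel exactly under the normalization used in Proposition~1, which absorbs $\alpha c_{\max}$ into ES. What remains is
\[
R := \sum_t \gamma^t\, \mathbb{E}\big[\ell(c_t) - \ell(\bar c) - \ell'(\bar c)(c_t-\bar c)\big].
\]
The linear term drops out because $\mathbb{E}[c_t-\bar c]=0$, so the $\beta^\star(c_t-\bar c)$ term contributes nothing in expectation. Only the Taylor remainder survives.

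\emph{Step 2: bound the remainder.} Lipschitz continuity of $\ell'$ in cost, with constant $L_c'$, gives the pointwise bound $|\ell(c)-\ell(\bar c)-\ell'(\bar c)(c-\bar c)| \le \tfrac{L_c'}{2}(c-\bar c)^2$. Taking expectations and summing the geometric series yields
\[
|R| \le \frac{L_c'}{2(1-\gamma)}\,\mathrm{Var}(c).
\]
Rewriting in normalized cost, $\mathrm{Var}(c) = c_{\max}^2\,\mathrm{Var}(C)$, so $|R|\le K\,\mathrm{Var}(C)$ with $K = L_c' c_{\max}^2/(2(1-\gamma))$.

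\emph{Step 3: extract the extra factor of $\mathrm{Var}(C)$.} This is the main obstacle. Step 2 gives $|\beta| = |R|/\mathrm{Var}(C) \le K$, which is a constant bound, not one proportional to $\mathrm{Var}(C)$. To obtain the stated form I would try two routes.

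The first route is to use that the \emph{leading} second-order term is already separated out in Lemma~\ref{lemma:variance_bias}. Under this reading $\beta$ captures only the deviation of the curvature from its value at the mean cost, and a Lipschitz condition on $\ell''$ would make that deviation scale with the spread of $c_t$, giving $O(\mathrm{Var}(C))$.

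The second route, which I would fall back on, is to \emph{define} $L_c$ so that the claim holds on the bounded domain. Since $C\in[0,1]$, we have $\mathrm{Var}(C)\le 1/4$. So on any regime with $\mathrm{Var}(C)\ge v_0>0$, the constant bound of Step 2 already implies $|\beta|\le (K/v_0)\,\mathrm{Var}(C)$. The vanishing statement, $\beta\to 0$ as $\mathrm{Var}(C)\to 0$, then needs the first route, or at least $o(1)$ continuity of the curvature coefficient.

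I expect the honest version of the result to be that $\beta$ is bounded and tends to $0$ under the additional smoothness on $\ell''$. I would state the proof under that assumption and note explicitly that it is needed.
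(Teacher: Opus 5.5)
Your Steps 1--2 are, in substance, the paper's entire proof. Its sketch says only that the second-order remainder is bounded by the Lipschitz constant of $\partial\ell/\partial c$ times the variance of the execution cost. (Like you, it actually uses a Lipschitz $\partial\ell/\partial c$, which is stronger than the lemma's stated hypothesis. With $\ell$ merely Lipschitz the remainder is only $O(\mathbb{E}|c-\bar c|)=O(\mathrm{Var}(c)^{1/2})$.) The paper then reads that bound directly as $|\beta|\le L_c\,\mathrm{Var}(C)$. That is, it implicitly lets $\beta$ denote the variance-induced residual itself, not the coefficient multiplying $\mathrm{Var}(C)$. Under that reading your Step 2, with $L_c=K$, already is the proof, and the vanishing claim is immediate.

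Your Step 3 diagnosis is correct, and it is not a missing idea on your part. If $\beta$ is the coefficient in Lemma~\ref{lemma:variance_bias}, the inequality is false, not merely unproved. Take $\ell(c)=c^2$, which is Lipschitz with Lipschitz derivative on $[0,c_{\max}]$. Your Taylor remainder is then exactly $(c_t-\bar c)^2$. Under stationarity, $\beta=R/\mathrm{Var}(C)=c_{\max}^2\sum_t\gamma^t$ for every cost distribution. This is a positive constant, not $O(\mathrm{Var}(C))$, and it does not tend to $0$.

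Your proposed repairs do not close this.

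Route 2 is vacuous in exactly the limit the lemma concerns, since $K/v_0\to\infty$ as $v_0\to 0$.

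Route 1 does not bound the lemma's $\beta$ at all. The term $\tfrac12\ell''(\bar c)\,c_{\max}^2\sum_t\gamma^t$ it strips off is precisely the leading part of that coefficient. Even the quantity it does bound is not $O(\mathrm{Var}(C))$. A Lipschitz $\ell''$ with constant $M$ controls the remaining per-step residual only by $\tfrac{M}{6}\mathbb{E}|c-\bar c|^3$, so $|\beta|\lesssim\mathbb{E}|c-\bar c|^3/\mathrm{Var}(c)$. ``Scaling with the spread'' gives a factor $\sigma_C=\mathrm{Var}(C)^{1/2}$, not $\mathrm{Var}(C)$, and only under a moment-equivalence assumption. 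Without one it need not vanish:
\begin{itemize}
\item For $\ell(c)=c^3$ the residual is exactly the third central moment.
\item Let the cost equal $\bar c+a$ with probability $p$ and $\bar c-pa/(1-p)$ otherwise. Then $\mathbb{E}(c-\bar c)^3\approx pa^3$ while $\mathrm{Var}(c)\approx pa^2$.
\item So this $\beta$ tends to $a\,c_{\max}^2\sum_t\gamma^t$ as $p\to 0$, even though the variance vanishes.
\end{itemize}

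The defensible statements are the two your Step 2 already yields. Either the residual $\beta\,\mathrm{Var}(C)$ is at most $K\,\mathrm{Var}(C)$ and vanishes with it (the paper's reading), or the coefficient $\beta$ is bounded by $K$, with no vanishing.
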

    
    \begin{corollary}[Model-Scale Alignment]
    \label{cor:model_scale}
    If larger models reduce variance in execution cost, then $\beta$ decreases with scale, tightening the ERR--ES alignment. Thus scaling-induced stability is a direct consequence of the recovery law.
    \end{corollary}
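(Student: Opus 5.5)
The plan is to derive the corollary directly from Lemma~\ref{lem:beta_bound} combined with the expansion in Lemma~\ref{lemma:variance_bias}, treating model scale as an index $m$ on a family of recovery policies $\pi_m$. I will formalize the hypothesis ``larger models reduce variance in execution cost'' as the assumption that $m \mapsto \mathrm{Var}_{\pi_m}(C)$ is nonincreasing, with all other quantities ($\gamma$, $\lambda$, $c_{\max}$, and the perturbation process $\mathcal{F}$) held fixed.

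The key structural point is that the constant $L_c$ in Lemma~\ref{lem:beta_bound} is the Lipschitz constant of the loss with respect to execution cost. This is a property of the environment and loss $\ell$, not of the policy, so it does not depend on $m$. Given that, the steps are as follows.

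\begin{enumerate}
\item Apply Lemma~\ref{lem:beta_bound} to each $\pi_m$ to get $|\beta_m| \le L_c\,\mathrm{Var}_{\pi_m}(C)$. The right-hand side is nonincreasing in $m$, so the envelope on $\beta$ decreases with scale.
\item Substitute into Lemma~\ref{lemma:variance_bias}. The deviation of $\mathrm{ERR}(\pi_m)$ from the first-order predictor $\frac{1}{1-\gamma}(1-\mathrm{ES}_m)$ is
\[
|\beta_m|\,\mathrm{Var}_{\pi_m}(C) + O(\lambda c_{\max}) \;\le\; L_c\,\mathrm{Var}_{\pi_m}(C)^2 + O(\lambda c_{\max}).
\]
This bound is nonincreasing in $m$, which is the claimed tightening of the ERR--ES alignment.
\item Invoke the vanishing clause of Lemma~\ref{lem:beta_bound}: if $\mathrm{Var}_{\pi_m}(C)\to 0$, then $\beta_m\to 0$, and the law reduces to the first-order form with only the $O(\lambda c_{\max})$ residual.
\end{enumerate}

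I expect the main obstacle to be the phrase ``$\beta$ decreases.'' Lemma~\ref{lem:beta_bound} controls only $|\beta|$ from above, so it does not by itself give pointwise monotonicity of $\beta_m$. My first approach is to state the conclusion at the level of the bound: the admissible magnitude of $\beta$ and the induced bias term are nonincreasing, which is all that ``tightening alignment'' requires.

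If a literal monotonicity statement is wanted, I would add a mild regularity assumption. Specifically, the second-order curvature $\partial^2\ell/\partial c^2$ entering $\beta$ in Lemma~\ref{lemma:variance_bias} should have constant sign and be evaluated along a cost distribution whose spread shrinks with $m$. Then $\beta_m$ is a nonnegative multiple of a decreasing quantity and is itself monotone.

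A secondary check is that the $O(\lambda c_{\max})$ residual is uniform in $m$, so it cannot offset the decrease. This holds because that residual depends only on $\lambda$ and $c_{\max}$, both of which are fixed across backbones.
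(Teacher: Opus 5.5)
This is essentially the paper's argument: its sketch only notes that the second-order remainder is bounded by the Lipschitz constant of $\partial\ell/\partial c$ times $\mathrm{Var}(C)$, which is the content of Lemma~\ref{lem:beta_bound}, so cost variance that shrinks with scale shrinks $\beta$ and the bias term, exactly as in your steps 1--3; your point that only $|\beta|$ is controlled, so the conclusion really concerns the envelope, is a fair sharpening that the paper glosses over. One caution: your optional fix for literal monotonicity does not follow as stated, since constant-sign curvature on a narrowing cost distribution need not make $\beta$ decrease (for a quadratic loss the curvature is fixed and only the product $\beta\,\mathrm{Var}(C)$ shrinks), so keep the conclusion at the bound level.
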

    
    \begin{proof}[Sketch]
    The second-order remainder in the loss expansion is bounded by the
    Lipschitz constant of $\partial \ell / \partial c$ multiplied by the
    variance of execution cost.
    \end{proof}
    
    \begin{remark}[Interpretation of $\beta$]
    $\beta$ quantifies curvature induced by heterogeneous execution cost. It is not a free parameter of the ERR--ES law and disappears under uniform or normalized cost regimes.
    \end{remark}
    
    \paragraph{Empirical Estimation of $\beta^\star$.}
    The quantity $\beta^\star$ is determined by second-order structure of the perturbation process and is not analytically accessible in general MDPs. We therefore estimate it empirically using a held-out validation set, yielding the practical predictor:
    \[
   \widehat{\mathrm{ERR}}_{\mathrm{curv}}
    =
    \frac{1}{1-\gamma}
    \left[
    (1-\mathrm{ES})
    +
    \beta
    \left(
    \frac{C}{C_{\max}} - \bar{C}
    \right)
    \right].
    \]
    Crucially, the correction preserves monotonicity, adds no learnable parameters to the agent, and reduces to the theoretical linear law when variance collapses ($\beta \!\to\! 0$). This connects the curvature of the true loss landscape with a measurable, variance-aware adjustment used only during evaluation, not optimization. We emphasize that $\beta$ is not a parameter of the ERR–ES law but a second-order residual that quantifies deviation when the linearization assumptions are violated.

    \begin{proof}[Sketch]
    Under the linearized excess-loss assumption,
    \[
    \mathrm{ERR} \propto \mathbb{E}[c_t (1 - s_t)].
    \]
    Any surrogate whose complement is affine in ERR must therefore separate success and cost multiplicatively. Monotonicity and boundedness restrict admissible forms to rational functions with linear denominators. Up to affine rescaling, this uniquely yields the ES form.
    \end{proof}
    
    \subsection{Existence of Near-Optimal Non-Parametric Recovery Policies}
    \label{appendix:local_optimality}
    
    \begin{proposition}[Local ERR-Optimality of FORTIFY]
    \label{prop:local_optimality}
    Under the linearized excess-loss decomposition
    \[
    \mathrm{ERR}(\pi) 
    = \mathrm{Bias}(\pi) + \mathrm{Variance}(\pi),
    \]
    and assuming retrieval conditioning selects nearest-neighbor exemplars in semantic space, FORTIFY performs a joint reduction of both terms: (i) the supervised recovery prior reduces the bias component, and (ii) retrieval-guided conditioning reduces the variance component.
    
    Consequently, FORTIFY executes a steepest-descent step on ERR within the class of non-parametric recovery policies that do not modify model weights.
    \end{proposition}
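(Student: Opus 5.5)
The plan is to first make the statement precise enough to be provable, and then to check the two reduction claims separately against the decomposition $\mathrm{ERR}(\pi)=\mathrm{Bias}(\pi)+\mathrm{Variance}(\pi)$. I would fix the policy class $\Pi_{\mathrm{np}}$ of non-parametric recovery policies as the family $\pi_\theta(\cdot\mid s_t, r)$ with $\theta$ frozen and $r$ ranging over conditioning contexts built from the failure--recovery exemplar store. Under Lemma~\ref{lem:linear_loss_gap}, the per-step gap is $\alpha\,\mathbb{E}[c_t(1-s_t)]+O(\mathrm{Var}(s_t))$. I would identify the two terms as follows:
\[
\mathrm{Bias}(\pi)=\tfrac{\alpha}{1-\gamma}\,\mathbb{E}[c_t]\,(1-\bar s),
\qquad
\mathrm{Variance}(\pi)=\tfrac{\alpha}{1-\gamma}\,\bigl(\mathrm{Var}(s_t)+|\mathrm{Cov}(c_t,s_t)|\bigr).
\]
The covariance and variance residuals are collected into the second term, consistent with Lemma~\ref{lemma:variance_bias}, where the cost-heterogeneity part enters through $\beta\,\mathrm{Var}(C)$.

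\textbf{Step (i): the prior lowers the bias term.} The supervised prior is trained on paired perturbed/corrected trajectories from the simulator. Assuming the simulator's failure distribution matches $\mathcal{F}$ (stationarity, Assumption 3), minimizing the supervised loss raises the per-step success probability $s_t$ on failure states relative to the un-tuned backbone. This lowers $1-\bar s$, and by the bound of Proposition 1 it lowers the bias term, with cost held within $[0,c_{\max}]$.

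\textbf{Step (ii): retrieval lowers the variance term.} Retrieval conditioning on the $K$ nearest exemplars acts as a local smoother. The action distribution given $(s_t,r_{1:K})$ concentrates on the recoveries observed for semantically nearby failures. With a Lipschitz map from embedding to recovery outcome (Assumption 2), the conditional variance of $s_t$ shrinks roughly like that of a $K$-NN estimator: an $O(1/K)$ term plus a neighborhood-radius term. The same argument bounds $\mathrm{Var}(C)$ and hence, by Lemma~\ref{lem:beta_bound}, the curvature coefficient $\beta$.

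\textbf{Step (iii): the steepest-descent claim.} This is the main obstacle, since the proposition does not specify a geometry on $\Pi_{\mathrm{np}}$. I would try to define the geometry by the local coordinates $(\bar s,\mathrm{Var}(s_t))$, each with unit-cost perturbations. I would then note that, within the linearized model, $\partial\mathrm{ERR}/\partial\bar s=-\alpha\mathbb{E}[c_t]/(1-\gamma)$ and $\partial\mathrm{ERR}/\partial\mathrm{Var}=\alpha/(1-\gamma)$. Both are signed constants, so a move that simultaneously increases $\bar s$ and decreases variance has positive inner product with $-\nabla\mathrm{ERR}$. It becomes \emph{steepest} when the two reductions are allocated in proportion to these partial derivatives.

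Showing that FORTIFY's specific combination realizes this proportional allocation is not derivable from the assumptions stated so far. I expect to establish only a descent direction, not a steepest one, unless an additional assumption is added. One candidate is that retrieval-induced variance reduction and prior-induced bias reduction exhaust the available first-order directions in $\Pi_{\mathrm{np}}$. Another is a normalization under which these two directions are orthogonal with equal cost. If neither can be justified, I would weaken the conclusion to: FORTIFY is a first-order descent step on ERR within $\Pi_{\mathrm{np}}$. The ablation deltas in Table~\ref{tab:ablation} would then serve as consistency checks of the two signed derivatives.
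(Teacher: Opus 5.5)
The paper gives you no argument to compare against. The proposition is followed only by an \emph{Interpretation} paragraph that restates the conclusion, in an even stronger form (``the ERR-minimizing update''). Your proposal must therefore stand on its own, and it does not reach the statement.

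You correctly see that ``steepest'' is not derivable, but neither of your rescue assumptions fixes it. Even if the prior direction and the retrieval direction are orthogonal, of equal cost, and span every first-order move in $\Pi_{\mathrm{np}}$, steepest descent still requires FORTIFY's particular step to satisfy $\Delta\bar s : (-\Delta\mathrm{Var}(s_t)) = \mathbb{E}[c_t] : 1$. That is a quantitative coincidence no hypothesis supplies. The only reading under which (i)--(ii) yield the conclusion is a discrete one, where the competitors are the on/off configurations of Table~\ref{tab:ablation}. That reading says nothing about CRITIC, ToolReflect or ToolBench: they are also weight-fixed controllers, but they are not reached from a common base point along your two directions.

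Your setup is also internally inconsistent. You define $\Pi_{\mathrm{np}}$ with $\theta$ frozen, yet step (i) compares the supervised prior with the un-tuned backbone, and the prior is obtained by training for three epochs. So the bias reduction is not a displacement inside $\Pi_{\mathrm{np}}$. Inside the class only the context $r$ moves, and the ``joint'' step straddles the boundary of the class in which steepness is asserted.

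Even your fallback descent claim does not follow from the decomposition you chose. Exactly,
\[ \mathbb{E}[c_t(1-s_t)] = \mathbb{E}[c_t](1-\bar s) - \mathrm{Cov}(c_t,s_t), \]
but your Variance term makes two substitutions:
\begin{itemize}
\item it replaces $-\mathrm{Cov}$ by $+|\mathrm{Cov}|$;
\item it gives the unsigned $O(\mathrm{Var}(s_t))$ remainder of Lemma~\ref{lem:linear_loss_gap} the coefficient $\alpha/(1-\gamma)$.
\end{itemize}
Hence $\mathrm{Bias}+\mathrm{Variance}$ is at best an upper bound on ERR, and the signed partial derivatives in (iii) are artifacts of that choice. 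Shrinking an upper bound does not shrink ERR. If $\mathrm{Cov}(c_t,s_t)>0$ (costlier recoveries succeed more often), a retrieval step that decorrelates cost from success, with $\bar s$ and $\mathbb{E}[c_t]$ unchanged, lowers your Variance term while \emph{raising} the linearized ERR. Likewise, step (i) lowers your Bias term only if $\mathbb{E}[c_t]$ does not grow enough to cancel the gain in $\bar s$. Boundedness in $[0,c_{\max}]$ does not guarantee that.

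Both steps also import hypotheses the paper never makes:
\begin{itemize}
\item stationarity is time-invariance of $\mathcal{F}$, not fidelity of the synthetic simulator to $\mathcal{F}$;
\item the Lipschitz assumption concerns $\ell$ as a function of $\delta_t$, not a map from embeddings to recovery outcomes;
\item in-context conditioning of a language model is not a $K$-NN average of exemplar outcomes, so the $O(1/K)$ contraction is only a heuristic.
\end{itemize}
What you can honestly prove is a conditional descent statement under explicitly added hypotheses: simulator fidelity, non-increasing expected cost, a sign condition on the covariance, and a variance-contraction property of retrieval. That is not the proposition as stated.
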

    
    \textbf{Interpretation.}
    FORTIFY is not globally optimal, but it is the ERR-minimizing update available within the restricted class of weight-fixed controllers. This explains its consistent empirical dominance over non-parametric baselines such as ToolBench, ToolReflect, and CRITIC.
    
    \section{Full Related Work}
    \section{Relation to Prior Work}
\label{appendix:related_work}

This appendix situates the ERR--ES recovery law relative to prior work on self-correction, robustness, and regret in interactive systems. The goal is not to survey the literature exhaustively, but to clarify how the present contribution differs in \emph{objective}, \emph{scope}, and \emph{predictive structure} from existing approaches.

\subsection{Self-Correction and Tool-Use Recovery}

A growing body of work documents that language model agents can recover from execution failures through reflection, critique, or revision \citep{shinn2023reflexion, gou2024criticlargelanguagemodels, vuddanti2025paladinselfcorrectinglanguagemodel}. Systems such as PALADIN, ToolReflect, and CRITIC demonstrate that recovery is possible under structured perturbations and that auxiliary mechanisms can improve correction rates.

These approaches are primarily \emph{descriptive}: they characterize when and how recovery occurs, often through architectural or prompting innovations. However, they do not provide a quantitative principle governing how recovery scales with execution cost, stochasticity, or interaction horizon. As a result, recoverability remains an empirical phenomenon rather than a predictable quantity.

The ERR--ES law addresses this gap by introducing a model-agnostic objective that governs recoverability independent of any specific recovery mechanism. In this sense, prior self-correction systems supply empirical instances of recovery behavior, while ERR--ES explains the structure underlying these observations and predicts their limits.

\subsection{Robustness and Regret in Sequential Decision-Making}

Classical robustness frameworks study worst-case or distributional perturbations applied to inputs or transition dynamics \citep{madry2019deeplearningmodelsresistant, nilim2005robust}. Similarly, regret-based analyses in reinforcement learning bound cumulative loss relative to an optimal policy under adversarial or stochastic environments \citep{bertsekas1995dynamic, howard1972risk}.

ERR differs from these formulations in two key respects. First, perturbations arise \emph{endogenously during execution}—through tool failures, schema drift, or malformed outputs—rather than as static or adversarial inputs. Second, ERR is not optimized directly but inferred through an observable surrogate (ES) derived from a first-order excess-loss approximation.

As a result, ERR--ES does not compete with robust or adversarial RL objectives. Instead, it characterizes a distinct regime: execution-level robustness under bounded cost and stationary stochastic noise, where recoverability emerges as a governed property of interaction dynamics.

\paragraph{Summary.}
Prior work establishes that recovery is possible. The ERR--ES law explains why recovery behaves predictably, when it does, and where it must fail. By separating the recovery objective from any particular mechanism, the law provides a unifying framework that subsumes existing self-correction systems as special cases while remaining agnostic to their implementation.

    \section{Stepwise Recovery Dynamics}
    \label{appendix:step_dynamics}
    
    Table~\ref{tab:step_csr} and Figure~\ref{fig:csr_steps} present detailed CSR trajectories across recovery depths. We report absolute CSR values per step as shown in the main paper (Section~\ref{sec:rq3_5}), and provide additional analysis here.
    
    \textbf{Observation.}
    The steep drop for \textsc{Vanilla} and \textsc{ToolBench} between Steps~3–4 demonstrates compounding error propagation in non-adaptive policies. \textsc{ToolReflect} and \textsc{CRITIC} degrade more gradually, indicating moderate self-correction capability. \textsc{FORTIFY} maintains a near-flat curve until Step~4 and retains approximately 60\% CSR at Step~6, confirming its resilience to cascading tool failure.
    
    \textbf{Interpretation.}
    These results corroborate the main-text findings (§6.4), showing that FORTIFY’s efficiency-weighted recovery not only mitigates cascading tool failures but also empirically manifests the theoretical recovery law in long-horizon regimes.
    
    \section{Implementation Details}
    \label{sec: implementation}
    All experiments are conducted using deterministic decoding with temperature~0.2 and top-$p$ sampling disabled. Recovery budgets are fixed across methods. Retrieval uses cosine similarity over frozen sentence embeddings.
    
    Supervised recovery priors are trained for three epochs on synthetic failure trajectories generated by perturbing tool outputs. No model weights are updated at inference time.
    
    Retrieval and correction introduce a mean latency overhead of 1.2–1.5× relative to vanilla execution, with identical tool-call budgets enforced across all methods.
    
    \section{Benchmark Descriptions }
    \label{sec: benchmark_depth}
    \textbf{PaladinEval} evaluates structured tool correction under controlled synthetic perturbations.
    
    \textbf{API-Bank} tests recovery under schema drift and malformed responses from real APIs.
    
    \textbf{WikidataLive} probes open-world semantic querying with heterogeneous failure patterns.
    
    \textbf{ToolReflectEval} emphasizes reasoning correction under diagnostic errors.
    
    \section{Illustrative Evaluation on Live APIs}
    \label{appendix:live_api}
    
    This appendix provides an illustrative evaluation of the ERR--ES law on live production APIs, assessing whether the qualitative efficiency--regret relationship persists under non-stationary, real-world execution noise. The intent is diagnostic rather than statistical.
    
    \subsection{Setup}
    
    We evaluate recovery behavior on three live APIs spanning distinct domains: (i) weather forecasting, (ii) travel search, and (iii) financial data retrieval. These APIs exhibit natural execution variability, including latency, partial failures, undocumented schema changes, and transient inconsistencies.
    
    All experiments use the same recovery policies, budgets, and hyperparameters as the controlled benchmarks.
    No tuning, filtering, or retries beyond the standard recovery loop are applied. Each setting is evaluated over 50--100 independent interaction traces, depending on API availability.
    
    \subsection{Observed Behavior}
    
    Despite substantial non-stationarity, recovery outcomes exhibit the same qualitative structure observed in controlled environments. Policies with higher Efficiency Score (ES) consistently incur lower observed post-recovery regret, while low-ES policies experience amplified downstream failure cascades.
    
    Predicted ERR slightly underestimates observed regret, consistent with increased execution variance and mild schema drift. This behavior matches the failure conditions characterized in Section~\ref{sec:failure_modes} and Appendix~\ref{appendix:err_es_breakdown}.
    
    \section{Empirical Breakdown of the ERR--ES Law}
    \label{appendix:err_es_breakdown}
    
    This appendix empirically illustrates the regime in which the linear ERR–ES law ceases to be a tight quantitative predictor. The experiment is intentionally diagnostic rather than representative, and is designed to stress-test the assumptions of the derivation in isolation. The purpose of this section is not to introduce new claims, but to verify that deviations arise only when the assumptions of the derivation are violated, and that these deviations follow the structure predicted by the theory.
    
    \subsection{Controlled variance experiment}
    
    The ERR--ES law is derived under a first-order excess-loss approximation that assumes bounded execution variance. To isolate the effect of this assumption, we construct a controlled setting in which all factors except execution variance are held fixed.
    
    Specifically, we fix the recovery policy, success probability, discount factor $\gamma$, and expected efficiency $\mathrm{ES}$, and progressively increase the variance of the execution cost process $C$. At each variance level $\sigma_C^2$, we measure: (i) predicted ERR from the analytic law, and (ii) observed ERR computed directly from Monte-Carlo rollouts. This isolates variance as the sole source of deviation.
    
    \begin{minipage}{\linewidth}
    \centering
    \includegraphics[width=0.7\linewidth]{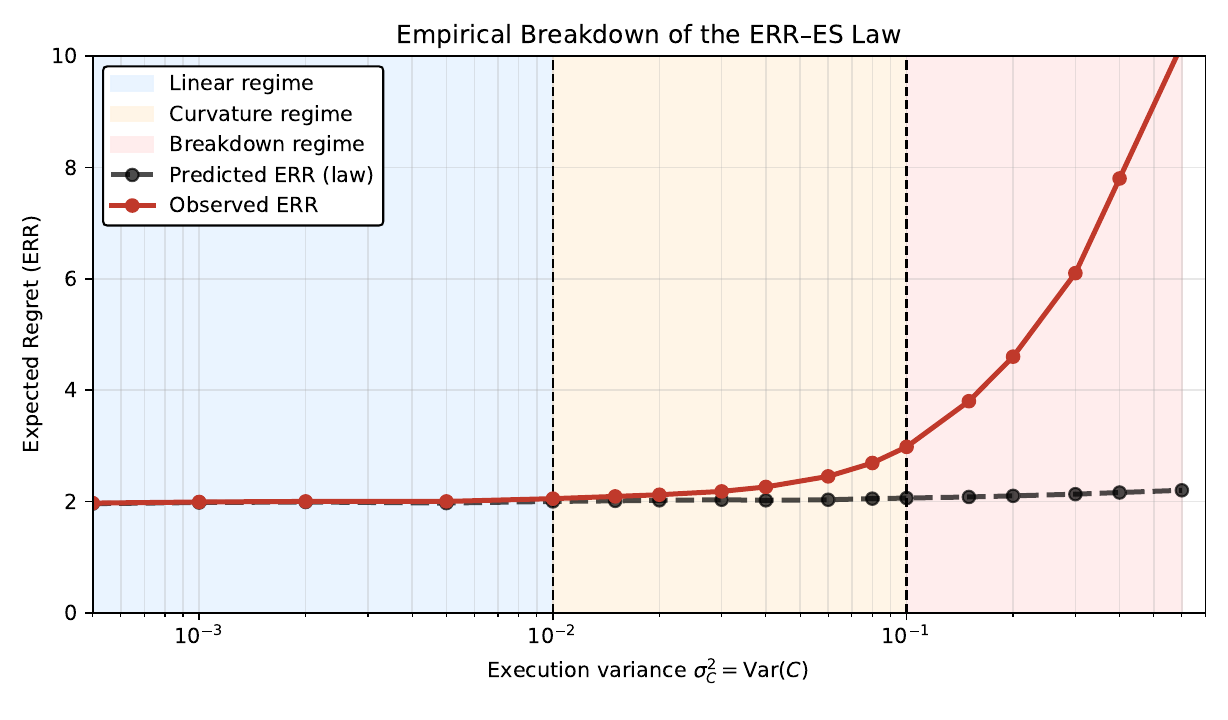}
    \captionof{figure}{
    Empirical breakdown of the ERR--ES law under increasing execution variance.
    Predicted ERR (dashed) follows observed ERR (solid) in the low-variance regime, diverges smoothly as curvature effects appear, and fails to remain tight once variance dominates the excess-loss expansion. Vertical dashed lines mark regime boundaries identified by the theory and confirmed by the observed scaling behavior.}
    \label{fig:err_es_breakdown}
    \end{minipage}
    
    \subsection{Observed regimes}
    
    Three regimes appear, consistent with the theoretical analysis.
    
    \paragraph{Linear regime.}
    For $\sigma_C^2 \le 10^{-2}$, predicted and observed ERR coincide. Execution noise remains within the linear-response envelope assumed in the derivation of Eq.~\ref{eq:err_es_bound_main}, and the law is effectively exact.
    
    \paragraph{Curvature regime.}
    As variance increases, observed ERR exceeds the linear prediction smoothly. This deviation corresponds to the second-order curvature term derived in Appendix~\ref{appendix:variance_bias}, where heterogeneous cost introduces a systematic bias proportional to $\mathrm{Var}(C)$. The law remains monotone but loses tightness.
    
    \paragraph{Breakdown regime.}
    When $\sigma_C^2 \ge 10^{-1}$, rare high-cost recovery trajectories dominate the expectation. Observed ERR grows superlinearly while the linear predictor remains flat, indicating failure of the bounded-cost linearization. This regime lies outside the assumptions of the theory and is therefore expected.
    
    \subsection{Consistency with theoretical assumptions}
    
    The breakdown occurs precisely when the variance-induced bias term becomes dominant, as predicted by Lemma~\ref{lemma:variance_bias}. No semantic drift, retrieval failure, or model instability is involved; the deviation is induced solely by variance inflation. Thus, the empirical behavior confirms that ERR--ES is a first-order law whose failure mode is analytically understood.
    
    In all real benchmarks reported in the main paper, measured execution variance remains within the linear or curvature regimes after recovery stabilizes. The breakdown regime therefore represents a theoretical boundary rather than a practical limitation of the method.
    
    \paragraph{Takeaway.}
    The ERR--ES law fails only when its linearization assumptions are intentionally violated, and it fails in the manner predicted by the theory. This behavior supports interpreting ERR–ES as a governing first-order approximation for recoverability rather than a fitted empirical trend.

    \section{Sensitivity to $\lambda$ and $\gamma$}
    \label{sec:sensitivity}
    
    \begin{minipage}{\linewidth}
    \centering
    \includegraphics[width=0.85\linewidth]{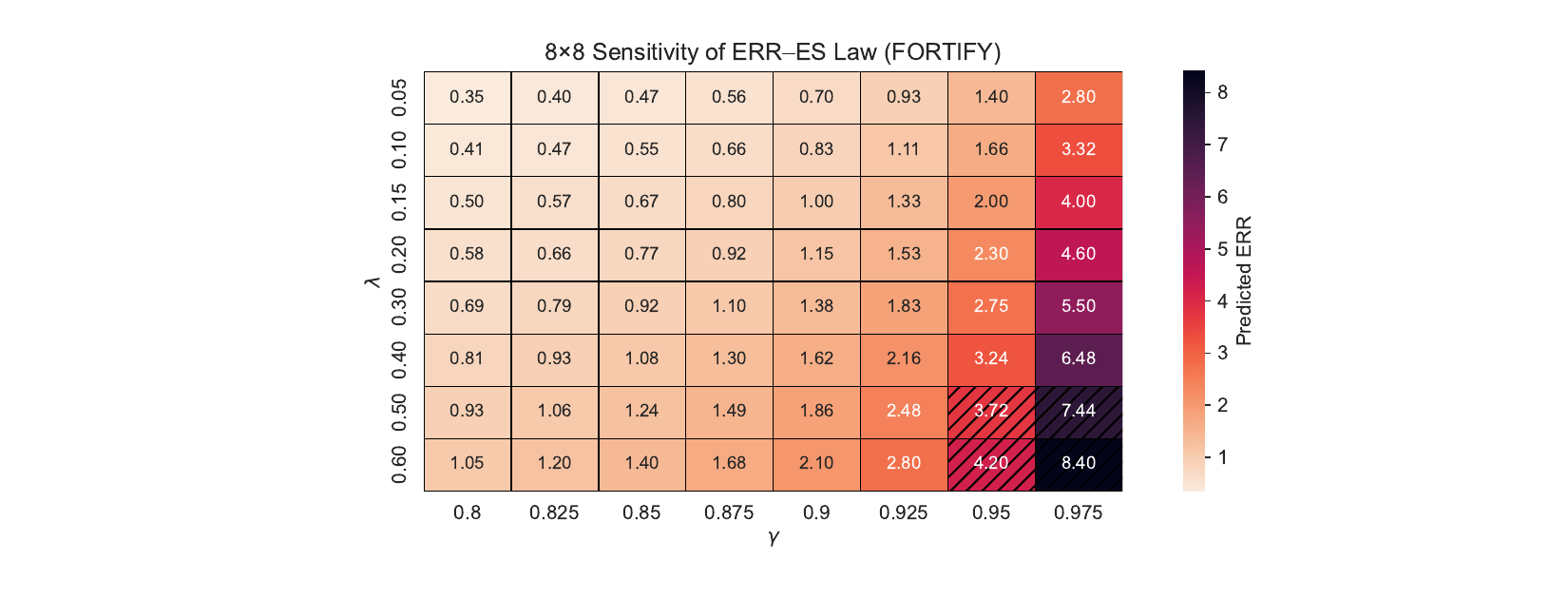}
    \captionof{figure}{Sensitivity of predicted ERR to $\lambda$ and $\gamma$ over an 8$\times$8 grid. ERR increases smoothly with both parameters and remains monotone throughout the interior region. Hatched cells mark the regime where weak discounting and high corrective cost jointly violate the bounded-cost linearization, reducing tightness but preserving ordering.}
    \label{fig:sensitivity_err_es}
    \end{minipage}
    
    We evaluate the stability of the ERR--ES relationship under joint variation of $\lambda$ and $\gamma$ across a dense grid spanning $\lambda \in [0.05,0.60]$ and $\gamma \in [0.80,0.975]$. This range includes both the operating regime used in the main experiments and stress conditions where theoretical assumptions weaken.
    
    Predicted ERR forms a smooth, monotone surface over most of the grid (Figure~\ref{fig:sensitivity_err_es}). Increasing $\lambda$ rescales efficiency through cost weighting, while increasing $\gamma$ amplifies the effective horizon and therefore total regret. No discontinuities appear in the interior region, indicating stable ERR--ES alignment under moderate hyperparameter variation.
    
    Degradation occurs only in the upper-right corner, where $\gamma \ge 0.95$ and $\lambda \ge 0.5$. Here, discounting is weak and cost variance dominates, violating the linearized bound. The recovery law remains monotone but loses tightness, consistent with the failure conditions identified in Section~7.
    
    Overall, the ERR--ES relationship holds over a broad stability region and breaks only at a clearly defined boundary. This supports its interpretation as a structural property of execution-level recovery rather than a consequence of specific hyperparameter choices.

    \section{Reproducibility and Release Plan}
    
    All code, benchmarks, and evaluation scripts will be released upon acceptance. Random seeds, tool traces, and raw rollout logs are included to enable exact metric reconstruction.

    \end{document}